\newtheorem{theorem}{Theorem}
\newtheorem{prop}{Proposition}
\DeclareMathOperator*{\argmin}{arg\,min}
\DeclareMathOperator*{\argmax}{arg\,max}
\DeclareMathOperator{\cat}{cat}
\DeclareMathOperator{\diag}{diag}
\DeclareMathOperator{\GNN}{GNN}
\DeclareMathOperator{\scatter}{\textbf{scatter}}
\setlist[enumerate]{wide=0pt, leftmargin=16pt, labelwidth=10pt, align=left}
\numberwithin{equation}{section}
\begin{document}

\title{Enforcing convex constraints in Graph Neural Networks}

\author{
Ahmed Rashwan \thanks{email: \texttt{ar3009@bath.ac.uk}} \\ University of Bath \And
Keith Briggs \\ BT Research \And
Chris Budd \\ University of Bath \And
Lisa Kreusser \\ University of Bath and Monumo
}

\maketitle

\begin{abstract}
    Many machine learning applications require outputs that satisfy complex, dynamic constraints. This task is particularly challenging in Graph Neural Network models due to the variable output sizes of graph-structured data. In this paper, we introduce ProjNet, a Graph Neural Network framework which satisfies input-dependant constraints. ProjNet combines a sparse vector clipping method with the Component-Averaged Dykstra (CAD) algorithm, an iterative scheme for solving the best-approximation problem. We establish a convergence result for CAD and develop a GPU-accelerated implementation capable of handling large-scale inputs efficiently. To enable end-to-end training, we introduce a surrogate gradient for CAD that is both computationally efficient and better suited for optimization than the exact gradient. We validate ProjNet on four classes of constrained optimisation problems: linear programming, two classes of non-convex quadratic programs, and radio transmit power optimization, demonstrating its effectiveness across diverse problem settings.
\end{abstract}

\section{Introduction}

In many machine learning (ML) applications, model outputs must satisfy certain constraints to ensure feasibility, safety, or interpretability \cite{wireless_gnn, const_optim}. A common approach to enforcing constraints is through activation functions (e.g., Sigmoid, Softmax) in the output layer \cite{bible}, which restrict outputs to a predefined set. However, while effective for simple linear or box constraints, this approach is not applicable for more complicated types of constraints. Many real-world applications require complex, data-dependent constraints. For example, in robotics, constraints can encode feasible movements or collision avoidance \cite{robotics1, robotics2, robotic_constraints}. Similarly, in energy systems, power generation limits vary with demand fluctuations \cite{grid_constraints}, and in industrial processes, physical constraints must be respected \cite{const_manif}. Such tasks are often modelled as optimisation problems with convex constraints. 

Several approaches have been proposed for enforcing a given set of convex constraints $C$ in ML models, each with distinct strengths and drawbacks. Many existing methods provide only approximate feasibility, typically by incorporating the constraints into the training loss using barrier functions or duality \cite{wireless_gnn, softconstraints_duality, MILP_constraints}. Instead, we will only be considering methods which yield strict feasibility guarantees. One common strategy involves first selecting an arbitrary reference point within $C$, and then using an ML model to compute an adjustment vector about this reference, ensuring that the final output remains feasible and accomplishes the desired task; we refer to this as the vector clipping method \cite{hitnrun_const, gnn_feasible_refinement, vector_clip}. Typically, the reference point is obtained using a convex solver, which is generally more computationally expensive than evaluating the adjustment vector. While this approach works well for fixed constraint sets, it becomes significantly more costly when the constraints vary with the model input, as a new reference point must be computed for each instance. This limitation makes training vector clipping models with larger batch sizes impractical.

A related line of research explores the integration of differentiable optimisation layers within ML models \cite{convex_layers, optnet, sample_constraint}. These methods leverage convex solvers capable of solving batches of optimisation problems concurrently, with gradients computed analytically. As a result, optimisation layers are more efficient in handling input-dependent constraints and have applications beyond constraint satisfaction. However, a key limitation lies in their scalability to large problem instances. The solvers used typically rely on interior-point methods, which require matrix factorizations that are difficult to efficiently implement on GPUs, particularly for sparse matrices \cite{sparse_gpu}. This makes it difficult to use this approach with GNNs, since graph-structured data is typically sparse and leads to heterogenous batches, requiring efficient exploitation of sparsity to sustain performance.

In addition, simple iterative algorithms have been embedded in ML models for constraint satisfaction, sharing similarities with our approach. A prominent example is the Sinkhorn algorithm \cite{sinkhorn}, which has been used to compute stochastic matrices for learning permutations \cite{permutation_rl,permutation_learning} and for satisfying positive-linear constraints \cite{linsatnet}. These iterative algorithms are well-suited for GPU implementations. However, their gradients are typically computed by unrolling the iterative procedure through back-propagation, which can be both computationally expensive and memory-intensive. We argue that unrolling is not always required: since many iterative algorithms—such as Sinkhorn—produce solutions to specific optimisation problems, their gradients can often be derived analytically, leading to more efficient methods of gradient computation \cite{sinkhorn_implicit_grad}.

Projection methods have long been used for finding feasible points in convex sets \cite{proj_survey}. A classic example is Von Neumann's algorithm \cite{neumann}, which uses alternating projections to find a feasible point in the intersection of two convex sets. While this algorithm produces feasible points, it generally does not allow us to characterize \emph{which} point in $C$ is returned. In contrast, Dykstra's projection algorithm \cite{han_projection} can find the unique projection of a point onto the intersection of convex sets. For our purposes, being able to determine the exact point produced by the algorithm is crucial, as it enables the computation of a surrogate gradient. The Component-Averaged Dykstra (CAD) algorithm \cite{CAV_dykstra} is a variant of Dykstra's method which exploits problem sparsity by using component averaged projections \cite{CAV} on the constituent sets $C_i$. CAD can be efficiently accelerated using GPU scattering operations \cite{GPU_scatter} and works seamlessly on batches of problems. We present a convergence result for the CAD and demonstrate its application to constraint satisfaction in ML models.

Building on these ideas, we introduce ProjNet, a GNN architecture designed for constraint satisfaction. ProjNet leverages the CAD algorithm to guarantee feasibility, while incorporating a sparse variant of the vector clipping method to improve model expressiveness. The architecture is end-end differentiable and fully GPU accelerated. Although this work focuses on linear constraints, most of the methods presented naturally extend to broader classes of convex constraints. We demonstrate the effectiveness of ProjNet across four classes of constrained optimization problems.

\subsection{Contributions} 

Our key contributions are as follows:
\begin{enumerate}
    \item We establish theoretical foundations of our work by proving a convergence result for the CAD algorithm, explicitly determining its limit, and incorporate these into an efficient GPU implementation, enabling significant speed-ups and scalability for large-scale applications.
    \item We introduce a computationally efficiently surrogate gradient for the CAD algorithm, tailored for ML applications.
    \item We present a refined variant of the vector clipping method, which we refer to as sparse vector clipping, which leverages problem sparsity and is well-suited for use in GNN models.
    
    \item We  propose ProjNet, a GNN architecture for constraint satisfaction problems which  integrates the CAD algorithm with sparse vector clipping. Unlike many existing approaches, our method naturally supports batched graph inputs.
    \item We demonstrate the effectiveness of our GNN-based approach on four constrained optimisation problems: linear programming, two classes of non-convex quadratic programs, and radio transmit power optimisation.
\end{enumerate}

\subsection{Overview}
This paper is structured as follows. Section \ref{sec:Proj} describes the individual components of our GNN architecture. We study the CAD algorithm in Section \ref{sec:CAD}, providing a convergence result, a surrogate gradient, and numerical experiments. In Section \ref{sec:results}, the effectiveness of our method is demonstrated on four classes of optimisation problems. Finally, Section \ref{sec: conclusion} concludes the paper.

\section{ProjNet: A GNN framework for constraint satisfaction} \label{sec:Proj}

We propose ProjNet, a GNN-based framework for solving constraint satisfaction problems. After introducing some preliminaries and notation in Section \ref{sec:prelim}, we construct a constrained input graph $\mathcal{G_C}$ in Section~\ref{sec:constraints} which embeds a set of linear output constraints $C$ into a given input graph $\mathcal G$. ProjNet is then defined as a model $\GNN_\theta$ that maps any constrained input graph to a feasible solution, i.e., $\GNN_\theta(\mathcal{G}_C) \in C$ for all $\mathcal{G}_C$. The architecture enforces constraint satisfaction through two core components: a projection layer and sparse vector clipping layers, detailed in Sections~\ref{sec:CADintro} and~\ref{sec:clip}, respectively.

\subsection{Preliminaries}\label{sec:prelim}

\paragraph{Convex constraint sets} We define a convex, non-empty constraint set $C = \bigcap^m_{i=1} C_i \subset \mathbb R^n$ as the intersection of individual constraint, where each $C_i = \{x \in \mathbb{R}^n: g_i((x_j)_{j\in N_i}) \leq 0 \}$. Here, $N_i \subset \{1, \hdots, n\}$ specifies the subset of variables affected by each constraint $C_i$, and $g_i:\mathbb{R}^{N_i} \to \mathbb{R}$ is a convex function. Given component $j\in\{1,\ldots, n\}$, we write $L_{j} = \{i \colon j \in N_i \}\subset \{1,\ldots,m\}$ for the set of constraints affecting $j$, and set $l_j = |L_j|$.  

We are especially interested in linear constraints where $g_i(x) = A_i x - b_i$ for $1 \leq i \leq m$, and $A_i$ denote the rows of some matrix $A = [A_1, \hdots, A_m]\in \mathbb{R}^{m\times n}$ and $b \in \mathbb{R}^m$. For linear constraints, the constraint set $C = \{x \in \mathbb{R}^n : Ax\leq b \}$ is a polytope. We denote the unique projection of a point $x$ onto a convex set $C$ by $P_C(x) =  \argmin_{z \in C} ||x - z||^2 $. 

\paragraph{GNN inputs} We assume that all problem instances are given as weighted graphs $\mathcal{G} = (X, E)$ where the rows of $X \in \mathbb{R}^{n \times k}$ correspond to $n$ nodes of $\mathcal G$, each with a $k$-dimensional node feature, 
and $E \in \mathbb{R}^{n \times n}$ encodes edge weights of all the edges connecting the $n$ nodes. The input of the GNN will be a constrained input graph $\mathcal G_C$ which will be associated with graph $\mathcal G$ and constraint set $C$, and will be defined in Section \ref{sec:constraints}. We denote the output of the GNN by $y \in \mathbb{R}^n$ where each output component $y_i \in \mathbb{R}$ corresponds to the output feature of node $i$. Our aim is to enforce the constraint $y \in C$ where the constraint $C$ is input-dependant.

\paragraph{Batched inputs} GNNs can be simultaneously applied to a batch of constrained input graphs associated with a batch of input graphs $(\mathcal{G}_1, \hdots, \mathcal{G}_\beta)$ of the form $\mathcal{G}_i=(X_i,E_i)$ for $i\in\{1,\ldots,\beta\}$ by interpreting them as one large disconnected graph $\mathcal{G} = \left(\cat(X_1, \hdots, X_\beta), \diag(E_1, \hdots, E_\beta)\right)$. Here, $\cat$ denotes concatenation and $\diag$ computes a block-diagonal matrix. This perspective allows GNNs and all other forms of graph computation discussed in this paper to generalise seamlessly to the batched case. Since these batched graphs  are larger and more sparse than the individual input graphs $\mathcal{G}_i$, it is particularly important to develop methods that exploit their sparsity.

\subsection{Constrained input graphs}\label{sec:constraints}
Inspired by graph-based formulations in mixed-integer programming \cite{contr_graph, constr_graph2}, where a linear objective is combined with linear constraints into a single graph, we extend this methodology beyond linear objectives to more general tasks represented by some graph $\mathcal{G}$. Specifically, we construct a constrained input graph $\mathcal{G}_C$ from a given input graph $\mathcal{G}$ and a set of linear constraints $Ax \leq b$. This new graph $\mathcal{G}_C = (X, b, A, E)$ is heterogeneous, comprising variable nodes $X$, constraint nodes $b$, and two types of edges: $E$ and $A$. The graph contains $n$ variable nodes with $k$-dimensional features from the rows of $X$, and $m$ constraint nodes, each associated with a scalar feature $b_i$. Edges in $E$ represent relationships among variable nodes in $X$, while the matrix $A$ serves as a weighted adjacency matrix capturing connections between variable and constraint nodes. This unified graph representation enables the use of a single GNN to handle tasks that incorporate both problem instances and their associated constraints.

Some of the we methods we present exploit the sparsity of $\mathcal{G}_C$ by identifying independent groups of constraints. Specifically, we wish to identify the finest partition  $\mathcal{P}$ of $\{1,\hdots,m\}$ such that constraints from different components of $P$ do not share any constrained variables. This partition corresponds to the connected components of the bipartite graph $(X, b, A)$ and can be computed efficiently using GPU graph processing software \cite{gunrock}, or alternatively, via a simple label propagation scheme outlined in Appendix~\ref{sec:component_code}.

\subsection{Component-averaged Dykstra (CAD) algorithm}\label{sec:CADintro} 
To ensure feasibility, we wish to compute the projection of any vector $x\in \mathbb R^n$ onto $C$, denoted by $P_C(x)$. To achieve this, we use the component-averaged Dykstra (CAD) algorithm \cite{CAV_dykstra}, an iterative scheme which exploits problem sparsity and is well suited for GPU implementations. CAD belongs to a family of Dykstra-style projection algorithms, and is closely related to the two-set Dykstra algorithm \cite{han_projection} and the Simultaneous Dykstra Algorithm \cite{product_space}. All of these methods operate under a common principle: computing the projection $P_C$ onto $C$ by using the projections $P_{C_i}$ on individual sets $C_i$. For  $m=2$, i.e.\ $C = C_1 \cap C_2$, the iterates of the two-set Dykstra algorithm are defined as: 
\begin{align} \label{dykstra}
\begin{split} 
    y^{(k)\hphantom{+1}} &= P_{C_1}(x^{(k)} + p^{(k)}),\\
    p^{(k+1)} &= x^{(k)} + p^{(k)} - y^{(k)}, \\
    x^{(k+1)} &= P_{C_2}(y^{(k)} + q^{(k)}),\\ 
    q^{(k+1)} &= y^{(k)} + q^{(k)} - x^{(k+1)}.
\end{split}
\end{align}

with initial values $x^{(1)} = x$ and $p^{(1)} = q^{(1)} = 0$. Assuming $C$ is non-empty, the sequence of points $x^{(k)}$ defined by \eqref{dykstra} is guaranteed to converge to $P_C(x)$ \cite{proj_survey, han_projection}.

A variation of Dykstra's algorithm can be extended to $m > 2$ by using a product space formulation \cite{product_space} to reduce the $m$ constraint setting to the two constraint case \eqref{dykstra}, referred to as the Simultaneous Dykstra Algorithm which holds for any $m\in \mathbb N$:
\begin{align}\label{dyk_parallel}
\begin{split}
    x^{(k+1)} &= \frac{1}{m} \sum^m_{i=1} P_{C_i}(x^{(k)} + p^{(k)}_{i}), \\
    p^{(k+1)}_i  &= x^{(k)} + p^{(k)}_i - P_{C_i}(x^{(k)} + p^{(k)}_{i}) , \quad 1 \leq i \leq m,
\end{split}
\end{align}
where $x^{(1)} = x$ and $p^{(1)}_1 = \hdots = p^{(1)}_m = 0$.
The Simultaneous Dykstra Algorithm is well-suited for GPU implementations as it only relies on simple vector operations and allows for parallel computation of the projections $P_{C_i}(x^{(k)} + p^{(k)}_{i})$. However, the convergence of \eqref{dyk_parallel} can be slow for sparse problems, as for a given constraint $C_i$, only a small number of components of $x^{(k)} + p^{(k)}_{i}$ and $P_{C_i}(x^{(k)} + p^{(k)}_{i})$ differ when $|N_i| \ll n$, and because of this, many components of the iterates $x_k$ only slowly change. To overcome the slow convergence of \eqref{dyk_parallel}, instead of averaging over all constraints as in \eqref{dyk_parallel},  averaging over constraints relevant for a given component has been proposed in \cite{CAV_dykstra, CAV}. Then, for any $m\in \mathbb N$, the CAD algorithm yields
\begin{align} \label{cad}
\begin{split}
    x^{(k+1)}_j &= \frac{1}{l_j}\sum\limits_{i \in L_j} \left(P_{C_i}(x^{(k)} + p^{(k)}_{i})\right)_j, \\
    p^{(k+1)}_{i} &= x^{(k)} + p^{(k)}_{i} - P_{C_i}(x^{(k)} + p^{(k)}_{i}), \quad 1\leq i \leq m.
\end{split}
\end{align}

To the best of our knowledge, no convergence results are available for \eqref{cad}.

\subsubsection{The linear CAD algorithm}\label{sec:linear_cad}

Dykstra-type algorithms assume that individual set projections $P_{C_i}(x)$ can be directly computed. This is possible for many classes of constraint sets such as linear subspaces, half-spaces, and second-order cones \cite{boyd2004convex}. Even when $P_{C_i}(x)$ is not known in closed-form, we can use approximate hyperplane projections \cite{general_dykstra}. For linear constraints of the form $C_i = \{x \in \mathbb{R}^n : A_i^T x \leq b_i\}$, these individual projections are given by
\begin{equation} \label{lin_proj}
    P_{C_i}(x) = x + \min \left\{0, \frac{b_i - A_i^T x}{||A_i||^2} \right\} A_i.
\end{equation}

We refer to the CAD algorithm \eqref{cad} with  projections defined by \eqref{lin_proj} as the linear CAD algorithm \cite{parallel_linear_dyk}. One of the key advantages of linear CAD is that it naturally lends itself to GPU implementations as the sparse multiplications in \eqref{cad} and \eqref{lin_proj} can be efficiently computed using GPU scattering operations \cite{GPU_scatter}. This makes the linear CAD algorithm significantly more scalable than traditional optimisers for computing projections onto convex polytopes. 

\subsection{Sparse vector clipping} \label{sec:clip}

Given a feasible point $z \in C$, we propose a sparse vector clipping layer that transforms $z$ into a learned output $y \in C$ by leveraging the sparsity of $\mathcal G_C$. We start by learning an unconstrained direction vector $\GNN^v_{\theta} (\mathcal{G_C}; z) = v \in \mathbb{R}^n$ first.  For each constraint $C_i$, we want to determine the maximum scaling factor $\alpha_{C_i}$ so that $z+\alpha_{C_i} v \in C_i$, i.e.\  $\alpha_{C_i} = \max \{\alpha \geq 0: z + \alpha_{C_i} \, v \in C_i \}$. Note that only nodes $j\in N_i$ are relevant for constraint $C_i$, and as such only $(z_j)_{j\in  N_i}$  and $(v_j)_{j\in  N_i}$ need to be considered in practice when finding $\alpha_{C_i}$. Computing each factor $\alpha_{C_i}$ is tractable for many types of convex sets $C_i$ and for many such scaling factors, closed-form equations are provided in \cite{vector_clip}.

The overall scaling factor is the minimum over the individual constraints 
$\alpha_C = \min \{\alpha_{C_1}, \hdots, \alpha_{C_m}\}$. The standard vector clipping scheme ensures feasibility by computing output $y = z + \min\{1,\, \alpha_C\}v \in C$, where  $\min\{1,\, \alpha_C\}v$ is the clipped direction. As $C$ is convex, any point $y \in C$ can be reached from any feasible point $z \in C$ using the above construction. However, when $m$ is large, $\alpha_C$ can become very small, limiting how far the the output $y$ can move from $z$ and thus significantly reducing expressiveness.

To address this, we propose to exploit constraint independence. Let $\mathcal{P}$ be the partition of the constraints into independent sets, as defined in Section~\ref{sec:constraints}. For each component $p \in \mathcal P$, we compute a local scaling factor $\alpha_p = \min_{i \in p} \alpha_{C_i}.$ This allows us to compute separate clipped directions for each independently constrained set of variables. More precisely, for each $p \in \mathcal P$, we compute output variables $y_j =  z_j+\min\{1, \alpha_p\}\, v_j$ for all $j \in N_i$ where $i \in p$.

. This definition is valid since, by construction, no two components of $\mathcal{P}$ constrain the same variable.

By applying scaling locally within each independent constraint group, this approach leverages the sparsity of $C$ to allow more expressive model outputs. The resulting vector $y \in C$ remains feasible, and the method is fully differentiable with respect to both the initial point $z$ and direction vector $v$.

To summarise, a sparse vector clipping layer $y = \text{SVC}_\theta(\mathcal{G}_C;z)$ procedes as follows:
\begin{enumerate}
\item Compute unconstrained direction $v = \GNN^v_\theta(\mathcal{G}_C;z) \in \mathbb{R}^n$.
\item For each constraint $i$, compute 
individual factors $\alpha_{C_i}$, only depending on $(z_j)_{j \in N_i}$ and $(v_j)_{j \in N_i}$.

\item For each connected component $p \in \mathcal P$, compute factor $\alpha_p = \min_{i \in p} \alpha_{C_i}$.
\item Return output vector $y\in C$, where for each component $p \in \mathcal P$ and all constraints $i \in p$, the variables $j \in N_i$ are computed as $y_j = z_j+\min\{1, \alpha_p\}\, v_j$.

\end{enumerate}

We can stack multiple sparse vector clipping layers to iteratively refine an input $z \in C$. For this, let $z^{(0)}=z\in C$ denote the input of the first layer and iteratively define $x^{(k+1)} = \text{SVC}_{\theta_k}(\mathcal G_C; x^{(k)})$, this leads to a set of $I+1$ feasible points $z^{(0)}, \ldots, z^{(I)}\in C$. Notably, subsequent feasible points $(z^{(k)})_{k\geq 1}$ are significantly less computationally expensive  than the initial projection $z^{(0)}$, which is obtained using the CAD algorithm.

\subsection{Network architecture} \label{sec:architecture}
A common technique for enforcing output constraints in machine learning involves computing an initial (unconstrained) output $w$ and then projecting it onto the feasible set $C$. As shown in \cite{approx_guarantee}, this projection-based strategy is a universal approximator for constrained functions. Our approach, called ProjNet, follows this principle by applying the CAD algorithm to $w$ which results in a feasible solution $z \in C$ to the constrained problem. Since $z$ lies on the boundary of $C$ when $w \notin C$, we propose to improve model expressiveness by making the interior of $C$ more accessible using sparse vector clipping layers, introduced in the previous section.

Given a constraint input graph $\mathcal G_C$  as input, 

ProjNet, visualised in Figure~\ref{fig:projnet}, consists of three steps:

\begin{enumerate}
    \item GNN: Compute an unconstrained output $w = \GNN_\theta^w(\mathcal{G}_C) \in \mathbb{R}^n$
    using a GNN over the constrained graph $\mathcal G_C$. 
    \item CAD Projection: Compute $z = P_C(w)$
    using the GPU-accelerated CAD algorithm.
    \item Sparse Vector Clipping: Compute a sequence of feasible points $z^{(0)},\hdots, z^{(I)} \in C$ using $I$ sparse vector clipping layers, starting from $z^{(0)} = z$ with model output $z^{(I)} = y$.
\end{enumerate}

ProjNet is fully GPU-accelerated, enabling efficient processing of large-scale inputs $\mathcal{G}_C$. Both the CAD algorithm and sparse vector clipping layers are differentiable with respect to their input, making ProjNet fully end-to-end differentiable allowing us to learn $y$ using standard backpropagation. We will further show in Section \ref{sec:diff_proj} that it is possible to compute a surrogate gradient for projections without resorting to unrolling procedures.

\begin{figure}[h]
    \centering
   \includegraphics[width=.6\linewidth]{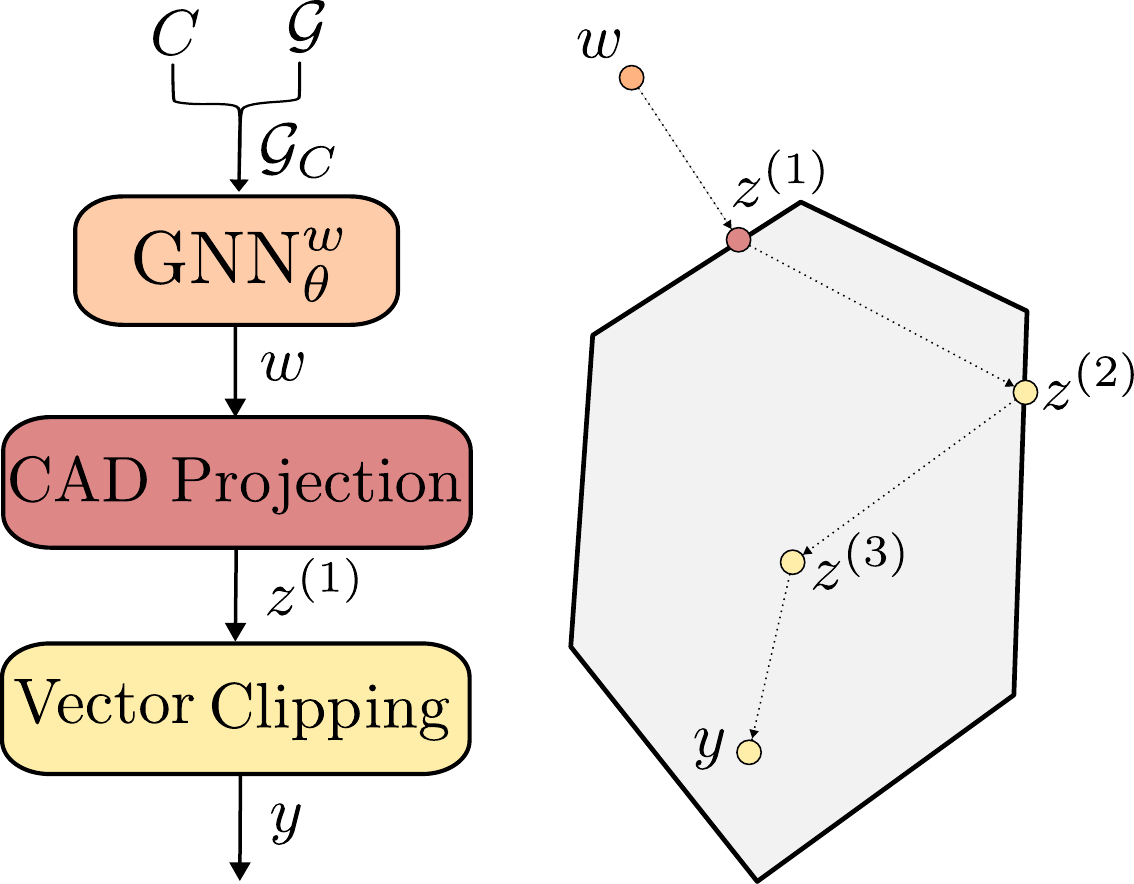}
   \caption{An illustration of the ProjNet architecture. Shows forward pass on a constraint polygon where points are colour coded with the modules used to compute them. Model takes as input a graph $\mathcal G$ with $n$ nodes and a set of linear constraints $C \subset \mathbb R^n$, and outputs a feasible point $y \in C$.
   }
   \label{fig:projnet}
  \end{figure}

\section{Analysis of the Component-averaged Dykstra (CAD) algorithm} \label{sec:CAD}

\subsection{Convergence of the CAD algorithm}

While both the two-set \eqref{dykstra} and simultaneous \eqref{dyk_parallel} Dykstra algorithms converge to $P_C(x)$, we show that the CAD algorithm \eqref{cad} converges to a certain non-orthogonal projection determined by the sparsity structure of $C$. We show this by using a sparse product space formulation, demonstrating that \eqref{cad} can be regarded as a special case of \eqref{dykstra} under a certain transformation.

\begin{theorem}\label{th:convergence}
    For input $x \in \mathbb{R}^n$, the CAD algorithm \eqref{cad} converges to the projection $P^l_C(x) = \argmin_{y \in C}\; \sum_{j=1}^n l_j(y_j - x_j)^2$. In particular, for input point $(x_j / \sqrt{l_j})_{1 \leq j \leq n}$ and feasible set $\{(x_j/  \sqrt{l_j})_{1 \leq j \leq n}: x\in C \}$, the CAD algorithm converges to $((P_C(x))_j /\sqrt{l_j})_{1 \leq j \leq n}$. 
\end{theorem}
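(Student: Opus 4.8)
The plan is to recast the CAD iteration \eqref{cad} as an instance of the two-set Dykstra iteration \eqref{dykstra} in an enlarged \emph{sparse product space}, and then read off the limit from the known convergence of \eqref{dykstra}. Concretely, I would work in $H = \bigoplus_{i=1}^m \mathbb{R}^{N_i}$ with the standard inner product, and introduce the embedding $\iota : \mathbb{R}^n \to H$ sending $z$ to the tuple of its restrictions $(z|_{N_1}, \dots, z|_{N_m})$. Its image $\mathcal{D} = \iota(\mathbb{R}^n)$ is a linear subspace of $H$. Define the two closed convex sets $\mathbf{C}_1 = \{(u^{(1)},\dots,u^{(m)}) \in H : u^{(i)} \in C_i \text{ for each } i\}$ and $\mathbf{C}_2 = \mathcal{D}$. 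The two facts to record are: (i) projection onto $\mathbf{C}_1$ acts blockwise, i.e.\ its $i$-th block is $P_{C_i}$ (using that $P_{C_i}$, viewed on $\mathbb{R}^n$, leaves coordinates outside $N_i$ untouched, so it is legitimately a map on $\mathbb{R}^{N_i}$); and (ii) projection onto $\mathbf{C}_2$ is exactly the component-averaging map, $P_{\mathbf{C}_2} u = \iota(z)$ with $z_j = \tfrac{1}{l_j}\sum_{i \in L_j} u^{(i)}_j$. Both follow from the separability of the squared Euclidean norm over blocks and over coordinates.

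Next I would run \eqref{dykstra} with $C_1 = \mathbf{C}_1$, $C_2 = \mathbf{C}_2$, started at $x^{(1)} = \iota(x)$, and match its iterates with \eqref{cad} term by term. The one subtle point is the Dykstra correction variable $q^{(k)}$ attached to $\mathbf{C}_2$: since $\mathbf{C}_2$ is a subspace and $q^{(1)} = 0$, an easy induction using linearity of $P_{\mathbf{C}_2}$ shows $q^{(k)} \in \mathbf{C}_2^{\perp}$ for all $k$, hence $P_{\mathbf{C}_2}(y^{(k)} + q^{(k)}) = P_{\mathbf{C}_2}(y^{(k)})$ and $q^{(k)}$ never affects the primal iterate. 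Given this, identify Dykstra's $p^{(k)}$ with $(p^{(k)}_1,\dots,p^{(k)}_m)$ and its primal iterate with $\iota(x^{(k)})$ for the CAD $x^{(k)}$ (one checks along the way that each $p^{(k)}_i$ stays supported on $N_i$): then $y^{(k)} = P_{\mathbf{C}_1}(x^{(k)}+p^{(k)})$ reproduces the blockwise projections $P_{C_i}(x^{(k)}+p^{(k)}_i)$, the correction $p^{(k+1)} = x^{(k)}+p^{(k)}-y^{(k)}$ reproduces the CAD dual update, and $x^{(k+1)} = P_{\mathbf{C}_2}(y^{(k)})$ reproduces $x^{(k+1)}_j = \tfrac{1}{l_j}\sum_{i\in L_j}(P_{C_i}(x^{(k)}+p^{(k)}_i))_j$. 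So the CAD iterates are precisely the Dykstra iterates transported through $\iota$.

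Since $C$ is non-empty, $\mathbf{C}_1 \cap \mathbf{C}_2 = \iota(C) \neq \emptyset$, so by the convergence of \eqref{dykstra} the primal iterates converge to $P_{\mathbf{C}_1\cap\mathbf{C}_2}(\iota(x))$, the Euclidean projection in $H$. This limit lies on the diagonal, hence equals $\iota(z^\star)$ with $z^\star = \argmin_{z\in C}\|\iota(z)-\iota(x)\|_H^2$, and expanding the $H$-norm blockwise gives $\|\iota(z)-\iota(x)\|_H^2 = \sum_{i=1}^m\sum_{j\in N_i}(z_j-x_j)^2 = \sum_{j=1}^n l_j(z_j-x_j)^2$, so $z^\star = P_C^l(x)$. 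Transporting back through $\iota$ yields $x^{(k)} \to P_C^l(x)$, which is the first claim. The ``in particular'' statement is then a one-line corollary: apply the result to the rescaled input $\tilde x_j = x_j/\sqrt{l_j}$ and rescaled feasible set $\tilde C = \{(z_j/\sqrt{l_j})_{1\le j\le n} : z\in C\}$ — whose coordinate incidence structure, hence the values $l_j$, are unchanged — so the weighted objective becomes $\sum_j l_j(\tilde z_j - \tilde x_j)^2 = \sum_j (z_j - x_j)^2$, whose minimiser over $\tilde C$ is $(P_C(x)_j/\sqrt{l_j})_{1\le j\le n}$.

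The main obstacle I anticipate is the bookkeeping in the second paragraph: being precise about the embedding/restriction conventions, checking that the $p^{(k)}_i$ remain $N_i$-supported, and genuinely justifying that the correction $q^{(k)}$ can be discarded rather than merely asserting it — without that observation the iterates of \eqref{dykstra} and \eqref{cad} would not coincide. Everything else (the two projection formulas, the norm expansion, the rescaling corollary) is routine.
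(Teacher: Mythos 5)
Your proposal is correct and follows essentially the same route as the paper's proof: a sparse product-space formulation in which $P_{\mathbf{C}_1}$ acts blockwise via the $P_{C_i}$, $P_{\mathbf{C}_2}$ onto the diagonal subspace realises the component average, the CAD iterates are matched to the two-set Dykstra iterates by induction (with the $q^{(k)}$ correction discarded by orthogonality to the subspace $\mathbf{C}_2$), and the weighted limit $P_C^l(x)$ plus the $1/\sqrt{l_j}$ rescaling are read off exactly as in the paper. The only cosmetic difference is that you work in $\bigoplus_i \mathbb{R}^{N_i}$ rather than the paper's zero-padded $\mathbb{R}^{m\times n}$, which is an isometric reformulation.
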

We prove Theorem~\ref{th:convergence} in Appendix~\ref{sec:proof}. To use the CAD algorithm for computing the orthogonal projection $P_C(x)$, we scale by its inputs by $1/\sqrt{l_j}$ and rescale the outputs by $\sqrt{l_j}$. We incorporate this in our GPU implementation of CAD outlined in Appendix~\ref{sec:cad_code}.

\subsection{Computational efficiency} \label{sec:comp_eff}

We compare the linear CAD algorithm's runtime against Gurobi \cite{gurobi} -- a popular commercial solver -- for computing the projection $P_C(x)$ on polytopes $C$. We used the dual barrier method for Gurobi as we found it to be the fastest amongst the available options for the class of problems considered. Our results in Figure \ref{fig:cad_runtime} demonstrate that the CAD algorithm can solve projection problems up to two orders of magnitude faster than Gurobi when the both the numerical tolerance is relaxed and the initial point $x$ is close to $C$. However, this advantage diminishes significantly when either of these conditions is not met. To encourage the unconstrained vector $w$ in ProjNet to be closer to the feasible set, we add an additional penalty term $c_h\, ||w - P_C(w)||^2$ to the loss function during training. The hyperparameter $c_h$ provides a trade-off between speed and performance: increasing $c_h$ speeds up the CAD algorithm by moving $w$ closer to $C$, but may reduced task performance.

\begin{figure}[ht]
    \centering
    \hspace{-0.55cm}
    \begin{subfigure}{0.35\textwidth}
    \centering
    \includegraphics[width=\linewidth]{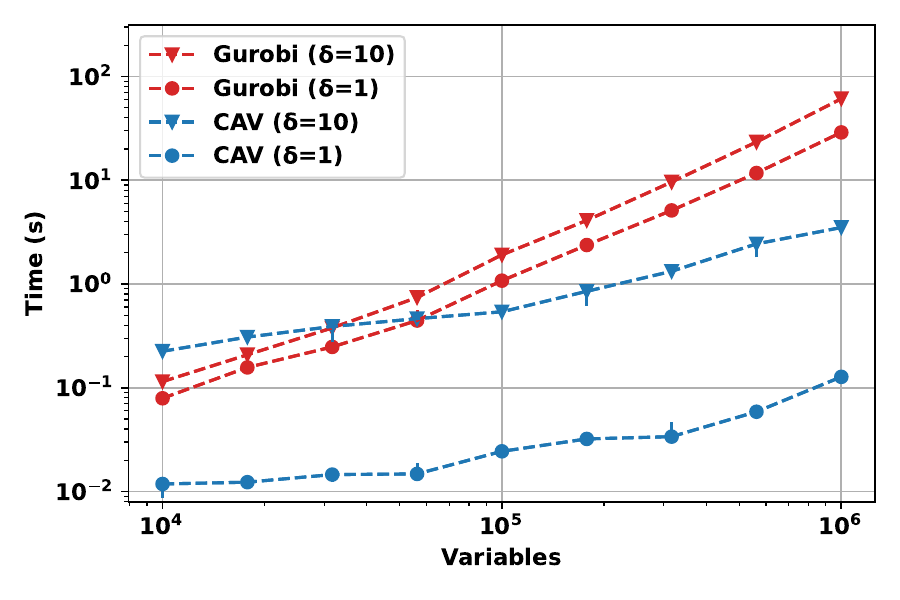}
    \caption*{$\epsilon = 10^{-2}$}
    \end{subfigure}\hspace{-0.3cm}~
    \begin{subfigure}{0.35\textwidth}
    \centering
    \includegraphics[width=\linewidth]{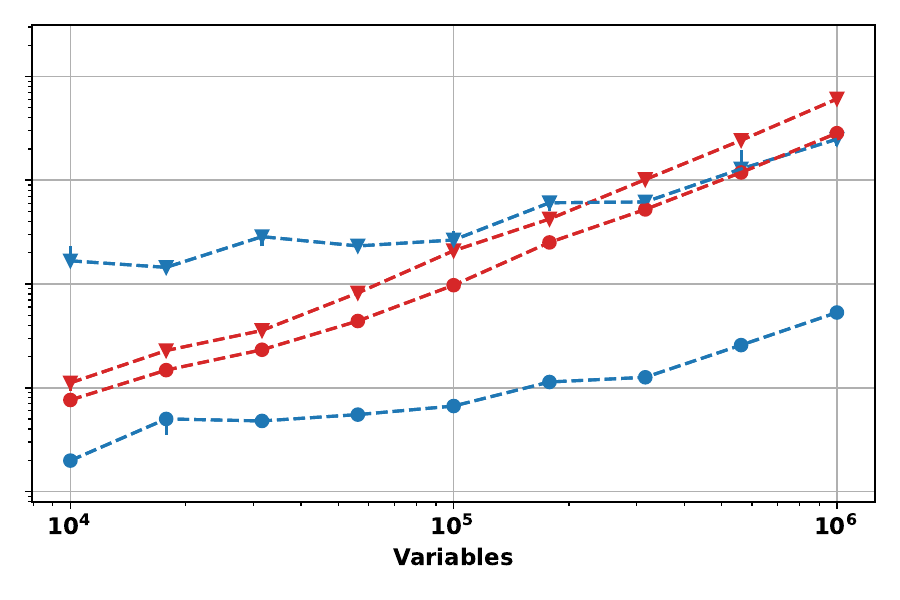}
    \caption*{$\epsilon = 10^{-3}$}
    \end{subfigure}\hspace{-0.3cm}~
    \begin{subfigure}{0.35\textwidth}
    \centering
    \includegraphics[width=\linewidth]{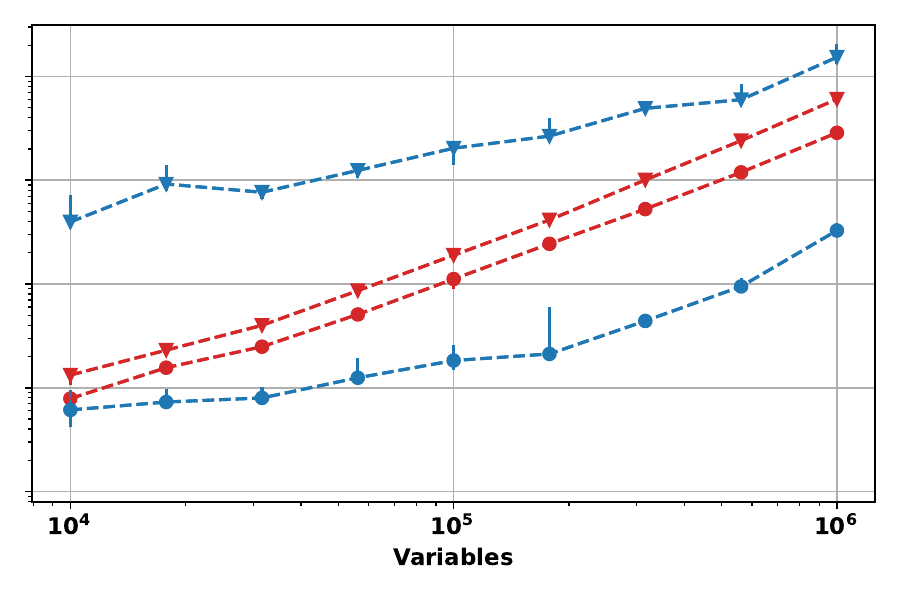}
    \caption*{$\epsilon = 10^{-4}$}
    \end{subfigure}
    \caption{Comparing runtimes of CAD algorithm and Gurobi for the linear projection problem as a function of output dimension $n$.  $\delta$ values in the legend are a measure of the distance between the initial point and the feasible set. Legend and y-axis are identical for all figures.}
    \label{fig:cad_runtime}
\end{figure}

\subsection{Surrogate gradients} \label{sec:diff_proj}
To embed the CAD algorithm into ML models, its derivative is required during the backward pass which —according to Theorem \ref{th:convergence}— is identical to the derivative of $P_C$. In this subsection, we first present the exact gradient of $P_C$ before introducing a surrogate gradient that is both computationally cheaper and more effective for optimization compared to the exact gradient. Surrogate gradients have proven useful in scenarios where the exact gradient either doesn't exist or is unsuitable for optimisation, particularly in spiking neural networks \cite{surr_grad2, surr_grad1} and straight-through estimators \cite{straight_through}.

For fixed linear constraints $C$, let $H_x \subset \mathbb R^n$ be the set of all points orthogonal to $x - P_C(x)$. Define the tangent cone \cite{tangent_cone} at $P_C(x)$, denoted by $T_x \subset \mathbb R^n$, as the set of all directions $v \in \mathbb{R}^n$ that can be approached by moving toward points in $C$ from $x$. More formally, $v$ is in $T_x$
if there exist sequences $(x_i)_{i\in \mathbb{N}} \subset C$ and $(\tau_i)_{i\in \mathbb{N}} \subset \mathbb{R}_+$ such that   $x_i \to x$, $\tau_i \to 0$ and $\tfrac{x_i - x}{\tau_i} \to v$.

Since $P_C$ is the projection onto a polytope, it is piecewise-affine and therefore differentiable almost everywhere. Its Jacobian $\partial_x P_C$ is given by the linear projection $P_{T_x \cap H_x}$ \cite{proj_gradient} which can be computed using the CAD algorithm -- for instance. However, this gradient is not appropriate for training ML models, as it can be expensive 
to compute and often has low rank, which may lead to poor local optima during optimization.

To remedy, we propose to use a surrogate gradient by projecting onto $H_x$ only, namely
\begin{align} \label{grad_approx}
        \partial_x P_C \approx  P_{H_x} =
        \begin{cases}
            I - d_x\, d_x^T, \; &x \notin C, \\
            I, &x \in C,
        \end{cases}
\end{align}
where $d_x = (x - P_C(x))/||x - P_C(x)||_2$ and $I$ denotes the identity operator. 

Proposition \ref{prop:propertiessurrogate} states some desirable properties of this surrogate gradient, proved in Appendix \ref{sec:propsurrogate}:
\begin{prop}\label{prop:propertiessurrogate}
Let \( C \subset \mathbb{R}^n \) be a polytope defined by non-redundant linear inequalities and let $x \in \mathbb R^n$ be a point where $P_C$ is differentiable. Then the surrogate Jacobian \( P_{H_x} \), defined in~\eqref{grad_approx}, satisfies the following properties:

\begin{enumerate}
    \item \textbf{Rank guarantee:}  
    $\mathrm{rank}(P_{H_x}) \geq n - 1$, while no such bounds exist for \( \mathrm{rank}(\partial_x P_C) \).
    
    \item \textbf{Exactness:}  
    $P_{H_x} = \partial_x P_C \iff x \in \mathrm{int}(C)$, or $P_C(x) \in \mathrm{int}(C_i)$ for all but one half-space $C_i$.

    \item \textbf{Alignment with descent direction:}  
    For any \( v \in \mathbb{R}^n \),
    $  \left\langle \partial_x P_C(v),\, P_{H_x}(v) \right\rangle = \left\| \partial_x P_C(v) \right\|^2$,
    indicating that \( P_{H_x}(v) \) aligns with the descent direction of the true projected gradient.

    \item \textbf{Local equivalence of gradient steps:} There exists $\beta > 0$ such that 
    $P_C(x + \partial_xP_C(v)) = P_C(x + P_{H_x}(v) ) $ for all $v\in \mathbb R^n$ with $||v|| < \beta$. This indicates that gradient-descent steps are identical for surrogate and exact gradients for a sufficiently small step-size.
\end{enumerate}
\end{prop}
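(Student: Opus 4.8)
The plan is to establish each of the four properties essentially independently, relying throughout on the characterization of $\partial_x P_C = P_{T_x \cap H_x}$ and on the structure of $T_x$ as a polyhedral cone. Throughout, let $d = d_x = (x - P_C(x))/\|x-P_C(x)\|_2$ (for $x \notin C$), so that $H_x = d^\perp$ has dimension $n-1$ and $P_{H_x} = I - d d^T$. Let $A_{\mathrm{act}}$ collect the rows $A_i$ of the active constraints at $P_C(x)$, i.e.\ those $i$ with $A_i^T P_C(x) = b_i$; then the tangent cone is $T_x = \{v : A_{\mathrm{act}} v \le 0\}$, and on the differentiability set its linear span (the lineality-plus-relative-interior structure) makes $P_{T_x \cap H_x}$ agree with the orthogonal projection onto a linear subspace $V = \ker A_{\mathrm{act}'} \cap H_x$ for a sub-selection of rows; I would first make this linear-subspace reduction precise, since all four parts flow from it.

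\textbf{Rank guarantee (1).} Since $P_{H_x} = I - dd^T$ is the orthogonal projection onto an $(n-1)$-dimensional subspace (or $I$ when $x \in C$), its rank is $n-1$ or $n$; that is the whole claim. For the ``no such bound'' half, I would exhibit a simple example — e.g.\ $C$ a single point (intersection of $n$ independent half-spaces meeting at a vertex) and $x$ outside — where $T_x = \{0\}$, hence $\partial_x P_C = 0$ has rank $0$. A two-dimensional picture with a corner suffices.

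\textbf{Exactness (2) and Alignment (3).} For (2): if $x \in \mathrm{int}(C)$ both Jacobians are $I$. If $P_C(x)$ lies in the interior of all active half-spaces but one, there is exactly one active constraint $C_i$, its normal is parallel to $d$ (KKT/normal-cone condition for projection onto a polytope), $T_x = \{v : A_i^T v \le 0\}$, and $T_x \cap H_x = H_x$ because $H_x = d^\perp = A_i^\perp \subset T_x$; so $P_{T_x \cap H_x} = P_{H_x}$. Conversely, if two or more constraints are active at $P_C(x)$ and $x \notin C$, then $T_x \cap H_x$ is a proper subspace/cone strictly inside $H_x$, so the projections differ — I would verify the strictness by noting the extra active normal is not a multiple of $d$ (using non-redundancy). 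For (3): write $\partial_x P_C(v) = P_{T_x\cap H_x}(v) =: u$. Since $P_{H_x}$ is orthogonal projection onto $H_x \supseteq T_x \cap H_x \ni u$, we get $P_{H_x}(u) = u$ and hence $\langle \partial_x P_C(v), P_{H_x}(v)\rangle = \langle u, P_{H_x}(v)\rangle = \langle P_{H_x}(u), v\rangle = \langle u, v \rangle$; and $\langle u, v\rangle = \langle P_{T_x\cap H_x} v, v\rangle = \|u\|^2$ because orthogonal projections are self-adjoint idempotents. This is the cleanest part.

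\textbf{Local equivalence of gradient steps (4).} Here the idea is that $x - P_C(x) = \|x-P_C(x)\|\,d$ is the component of $x$ along $d$, and both $\partial_x P_C(v)$ and $P_{H_x}(v)$ differ only in their $d$-component: indeed $P_{H_x}(v) - \partial_x P_C(v) = (v - dd^Tv) - P_{T_x\cap H_x}(v)$, and since $P_{T_x \cap H_x}(v) \in H_x$, the difference lies in $H_x$... so I need a more careful argument. The right statement is that $P_C(x+w)$ depends on $w$, for $w$ small, only through $P_{T_x\cap H_x}(w)$ plus the unchanged normal displacement: locally $P_C$ is affine on the normal cone region, $P_C(x+w) = P_C(x) + P_{T_x \cap H_x}(w)$ for all sufficiently small $w$ (this is exactly what ``$\partial_x P_C = P_{T_x\cap H_x}$ and $P_C$ piecewise-affine'' gives, valid on a neighborhood because $x$ is in the interior of a linearity cell). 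Apply this with $w = \partial_x P_C(v)$ and with $w = P_{H_x}(v)$, both of norm $< \beta$ for $\|v\|$ small: we need $P_{T_x\cap H_x}(\partial_x P_C(v)) = P_{T_x \cap H_x}(P_{H_x}(v))$. The left side is $P_{T_x\cap H_x}(P_{T_x\cap H_x}(v)) = P_{T_x\cap H_x}(v)$ by idempotence; the right side is $P_{T_x\cap H_x}((I-dd^T)v) = P_{T_x\cap H_x}(v) - (d^Tv)\,P_{T_x\cap H_x}(d)$, and $P_{T_x\cap H_x}(d) = 0$ since $d \perp H_x \supseteq T_x\cap H_x$. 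Hence both sides equal $P_{T_x\cap H_x}(v)$, giving the claim with $\beta$ the radius of the linearity neighborhood (which is uniform in direction, only the magnitude matters, so a single $\beta>0$ works; a cleaner route is to first shrink to ensure $\|\partial_x P_C(v)\|,\|P_{H_x}(v)\| \le \|v\| < \beta$, both being $1$-Lipschitz).

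\textbf{Main obstacle.} The delicate point is item (4): justifying that $P_C$ is \emph{exactly} affine, equal to $P_C(x) + P_{T_x\cap H_x}(\cdot)$, on a full neighborhood of $x$ — not merely differentiable at $x$. This requires arguing that the normal-cone / active-set decomposition is locally constant near a differentiability point, which uses non-redundancy of the inequalities and the fact that at such $x$ the projection $P_C(x)$ is not on a lower-dimensional face whose active set could jump; the earlier-cited piecewise-affine structure of $P_C$ (from \cite{proj_gradient}) is what I would lean on to close this gap, together with a short argument that a differentiability point lies in the interior of its linearity piece. Items (1) and (3) are essentially immediate, and (2) is a routine active-set/KKT computation.
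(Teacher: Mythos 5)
Your proposal is correct and follows essentially the same route as the paper's proof for all four parts: the rank-$0$ counterexample at a vertex, the active-set/KKT case analysis for exactness, the self-adjoint-idempotent computation for alignment, and the local-affinity-plus-non-expansiveness argument with $P_{T_x\cap H_x}\circ P_{H_x}=P_{T_x\cap H_x}$ for the gradient-step equivalence. The "main obstacle" you flag (that differentiability of a piecewise-affine map at $x$ yields exact affinity on a neighbourhood) is treated by the paper with the same one-line appeal to piecewise affinity that you propose, so there is no substantive divergence.
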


Numerical experiments in Appendix \ref{sec:surr_grad_experements} demonstrate that the surrogate gradient is both faster and more stable than the exact gradient when $C$ is a polytope. While beyond the scope of this work, we conjecture that the benefits of the surrogate gradient \eqref{grad_approx} extend to general convex constraints $C$.

\section{Numerical results} \label{sec:results}

To assess the performance of the ProjNet model, we test it on four classes of linearly constrained optimisation problems. For comparison, we employ both classical optimization-based methods, which are detailed for each specific task, as well as ML baselines in the form of ablations of the two main components of ProjNet: sparse vector clipping (SVC) and CAD. For reference, we also included a method which samples a random point in $C$ using the hit-and-run algorithm \cite{hitnrun}. All methods considered are guaranteed to produce feasible outputs $y \in C$.

The problem instances used in our evaluation were randomly generated using different graph distributions, as outlined in Appendix~\ref{sec:gen_prob}. In Figures \ref{fig:lin} and \ref{fig:quad_pow}, we report the runtime for selected methods as a function of problem size.

Average objective values are shown in Table \ref{tab:opt}. For linear programming, we present the ratio of achieved-to-optimal objective values, while for the other three applications, we report average objective values. Consequently, the standard deviations for linear programming appear smaller, since they are normalized by each instance’s optimal value obtained from an LP solver.
\begin{table}[]
    \centering
        \centering
        \captionsetup{justification=centering}
        \caption{Average objective values and their standard deviation for each application problem. For linear programming, optimality is measured instead. Last row combines programming methods. Data is obtained for different training seeds and problem instances.
        }
        \begin{tabular}[t]{l l | c c c c} \toprule
         Type & Method & Linear & Quadratic (ER) & Quadratic (BA) & Transmit power \\ \midrule
         \multirow{6}{*}{ML}& ProjNet ($c_h = 0$) & $0.9973 \pm 0.0001$ & $6.83 \pm 1.47$ & $7.73 \pm 1.52$ & $\mathbf{3.01 \pm 0.23}$ \\
         &ProjNet ($c_h = 0.01$) & $0.9905 \pm 0.0004$ & $6.78 \pm 1.31$ & $7.86 \pm 1.59$ & $2.82 \pm 0.36$ \\
         &ProjNet ($c_h = 1$) & $0.9632 \pm 0.0015$ & $6.75 \pm 1.50$ & 
         $7.67 \pm 1.38$ & $1.79 \pm 0.31$ \\ 
         &No SVC ($c_h = 0$)& N/A &$6.76 \pm 1.67$ & $7.72 \pm 1.47$ & $2.92 \pm 0.26$ \\ 
         &No CAD & $0.2954 \pm 0.7644$ & $1.93 \pm 1.33$ & $1.91 \pm 1.16$ & $1.79 \pm 0.30$ \\ 
         &No SVC or CAD & $0.3365 \pm 0.2662$ & $1.69 \pm 1.29$ & $1.43 \pm 1.04$ & $1.71 \pm 0.30$ \\ \midrule
         \multirow{3}{*}{Classical}& trust-constr & N/A & $\mathbf{6.86 \pm 1.51}$ & $\mathbf{7.98 \pm 1.68}$ & $2.15 \pm 0.59$ \\
         &LP / DCP / FP & $\mathbf{1.0000 \pm  0.0000}$ & $6.38 \pm 1.80$ & $7.60 \pm 1.70$ & $2.79 \pm 0.61$ \\ 
         &Random & $0.1585 \pm 3.6216$ & $0.05 \pm 1.07$ & $0.00 \pm 0.80$ & $1.59 \pm 0.27$ \\ \bottomrule
    \end{tabular} 
    \label{tab:opt}
\end{table}

\subsection{Linear programming}
Linear programs (LPs) are perhaps the simplest non-trivial class of convex problems. They are a well-studied class of problems with very broad applications and some highly optimised solvers \cite{lin_prog}.

Most algorithms for LPs focus on obtaining numerically precise, exact solutions. Such methods are usually CPU-based as they utilise sparse linear algebra computations that are not easily accelerated using GPUs. Being reliant on CPU computation makes such classical methods less scalable to large LPs, for this reason there has been some recent interest in using first-order methods to solve LPs \cite{hybrid_grad, pdlp}, but state-of-the-art solvers remain mostly rooted in CPU computation. We consider the problem of finding fast, approximate solutions to large-scale LPs of the form
\begin{align} \label{lp}
    \max_x \; c^T x, \quad \textrm{s.t.}\; Ax \leq b.
\end{align}
\paragraph{Classical baselines} We consider three LP solvers: Gurobi, a widely used commercial solver; PDLP \cite{hybrid_grad}, a recent first-order method for large-scale LPs; and PDLP-GPU, its GPU-accelerated implementation. We use the gurobipy interface for Gurobi, OR-Tools \cite{ortools} for PDLP, and CVXPY \cite{cvxpy} for PDLP-GPU, which interfaces with NVIDIA’s cuOpt solver .

\begin{figure}[h]
    \centering
    \hspace{-0.55cm}
    \begin{subfigure}{0.35\textwidth}
    \centering
    \includegraphics[width=\linewidth]{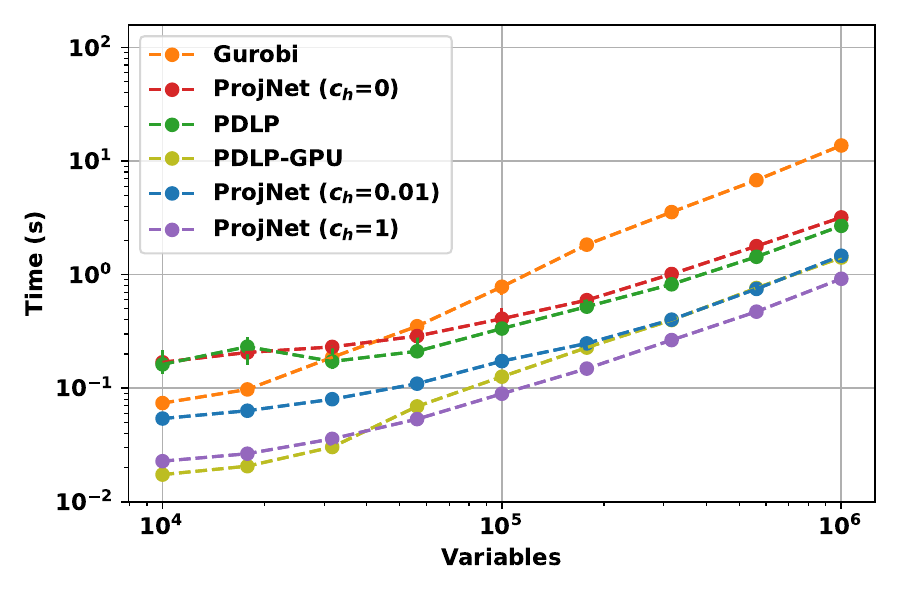}
    \caption*{$\epsilon = 10^{-2}$}
    \end{subfigure}\hspace{-0.3cm}~
    \begin{subfigure}{0.35\textwidth}
    \centering
    \includegraphics[width=\linewidth]{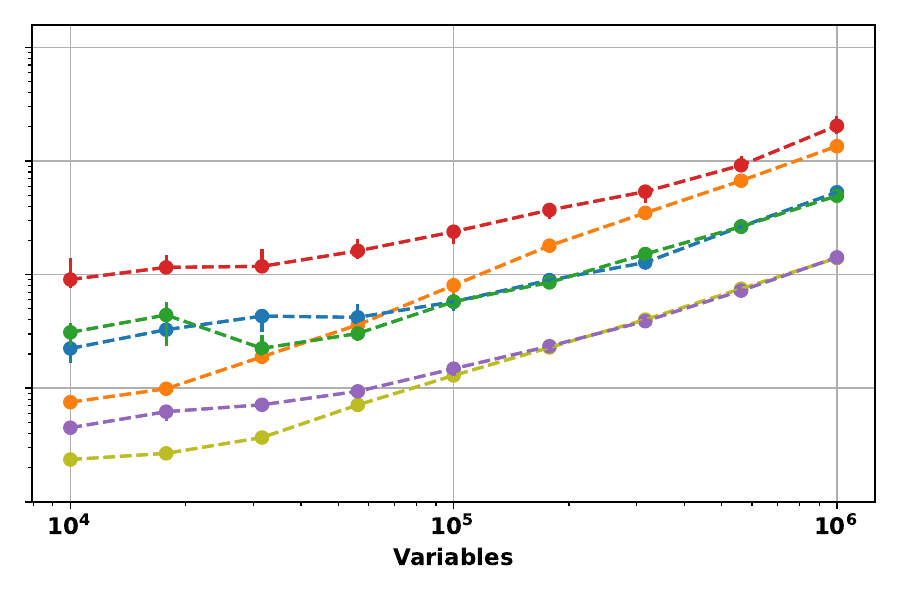}
    \caption*{$\epsilon = 10^{-3}$}
    \end{subfigure}\hspace{-0.3cm}~
    \begin{subfigure}{0.35\textwidth}
    \centering
    \includegraphics[width=\linewidth]{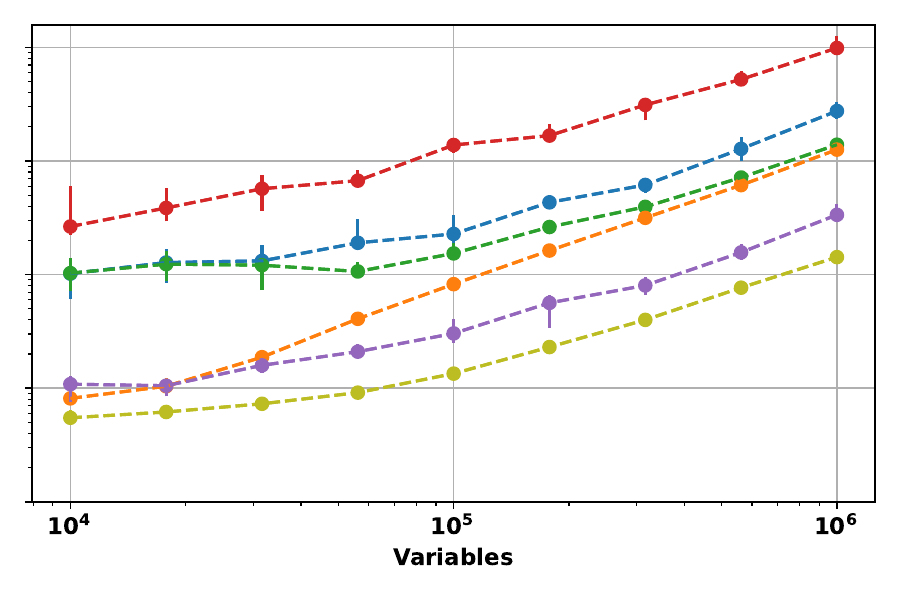}
    \caption*{$\epsilon = 10^{-4}$}
    \end{subfigure}
    \caption{Comparing runtime of ProjNet, PDLP, and Gurobi for linear programming. We trained three ProjNet models with different $c_h$ values shown in legend. Error bars show upper/lower quartiles for each point. Legend and y-axis are identical for all figures.}
    \label{fig:lin}
\end{figure}

\subsection{Non-convex quadratic programming}
We consider quadratic programs of the form
\begin{align} \label{quad}
    \max_x \; x^T Q x +  c^T x, \quad \text{s.t.} \; Ax \leq b.
\end{align}
If $Q$ is negative-definite then the problem is convex and can be solved efficiently using interior-point solvers \cite{boyd2004convex}. Alternatively, Dykstra-type algorithms can also be directly used to solve convex quadratic problems \cite{product_space}, although this is not a popular approach due to its slower convergence.  We will instead be considering the general, non-convex case in which $Q$ is an arbitrary symmetric matrix.

Quadratic problems of the form \eqref{quad} can be modelled as weighted graphs $\mathcal{G} = (c, Q)$, where the linear coefficients $c$ are treated as node features and the matrix $Q$ defines undirected edge weights. To evaluate our approach, we tested two types of graph topologies for $\mathcal{G}$: Erd\H{o}s-R\'{e}nyi (ER) \cite{erdos} and Barab\'{a}si-Albert (BA) \cite{barabasi} random graphs.

\paragraph{Classical baselines} We tested a Difference of convex programming (DCP) method, which yields approximate solutions to \eqref{quad} by solving a sequence of approximating convex quadratic programs \cite{dccp}, and trust-constr, a trust region method which yields feasible approximate solutions \cite{trust-constr} implemented in the scipy.minimize package \cite{scipy}.
\begin{figure}[ht]
    \centering
    \hspace{-0.55cm}
    \begin{subfigure}{0.35\textwidth}
    \centering
    \includegraphics[width=\linewidth]{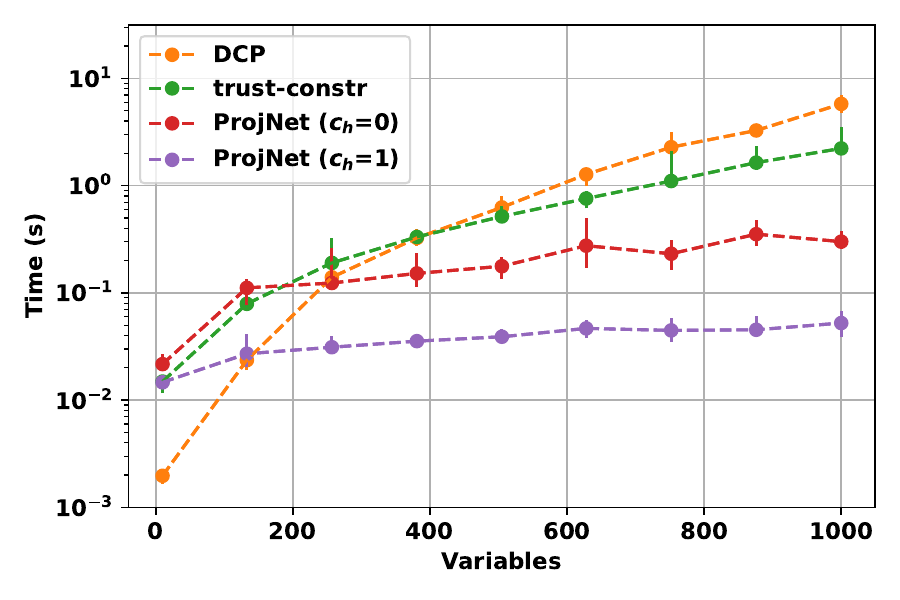}
    \caption*{Quadratic (ER)}
    \end{subfigure}\hspace{-0.3cm}~
    \begin{subfigure}{0.35\textwidth}
    \centering
    \includegraphics[width=\linewidth]{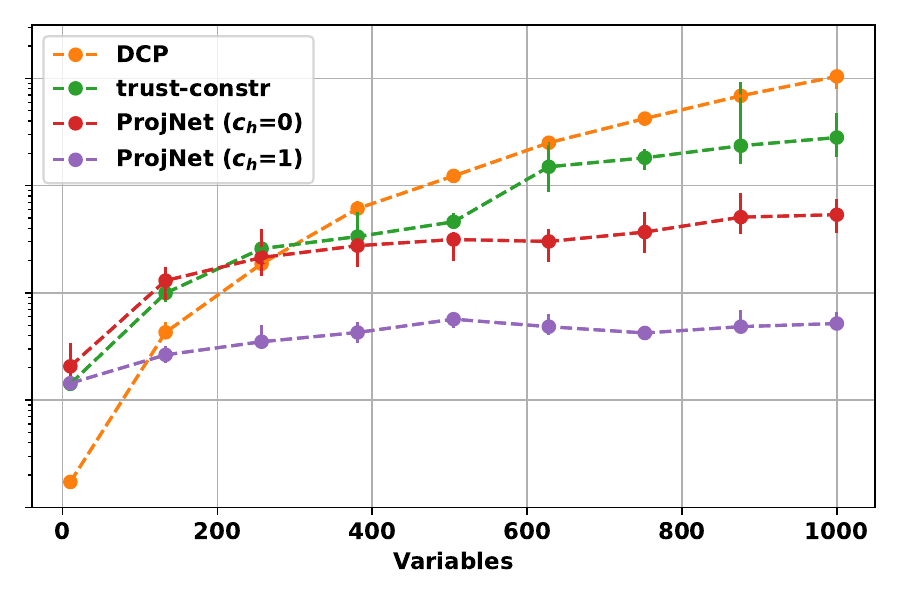}
    \caption*{Quadratic (BA)}
    \end{subfigure}\hspace{-0.3cm}~
    \begin{subfigure}{0.35\textwidth}
    \centering
    \includegraphics[width=\linewidth]{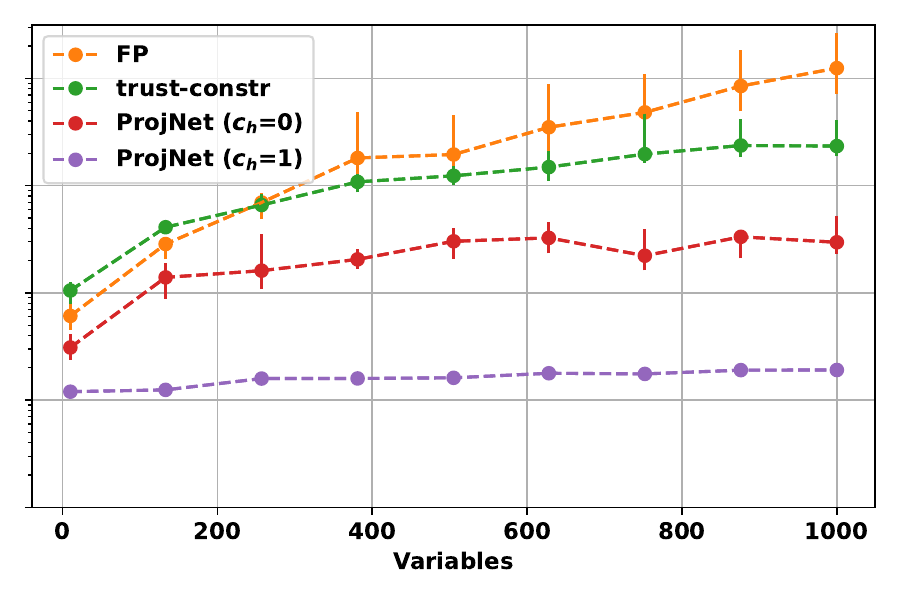}
    \caption*{Transmit power}
    \end{subfigure}
    \caption{Comparing runtime of ProjNet, trust-constr, and programming baselines for three classes of optimisation problems. We show two ProjNet models with different values for $c_h$. Precision is set to $\epsilon = 10^{-3}$. Plots include error bars showing upper/lower quartiles for each point. Y-axis is shared.}
    \label{fig:quad_pow}
\end{figure}

\subsection{Transmit power optimisation}

Given $n$ radio transmitters, let $H \in \mathbb R^{n \times n}_{\geq 0}$ be a channel gain matrix and let $\sigma > 0$ denote background noise. We study the problem of optimising a vector of radio transmit powers $0 \leq x_i \leq p_{\text{max}}$ such that the sum of channel capacities
$$c_i = \log \left(1 + \tfrac{H_{ii}x_i}{\sum_{j  \neq i} H_{ij} x_j + \sigma^2} \right)$$ is maximised, subject to individual minimum capacity requirements $s_i > 0$:
\begin{align} \label{power_optim}
    \max_{x \in \mathbb{R}^n}\;  \; \frac{1}{n}\sum_i c_i, \quad
    \text{s.t.}\;  s_i \leq c_i\,, \; 0 \leq x_i \leq p_m.
\end{align}
This is a linearly constrained problem with a non-convex objective which can be shown to be NP-hard in general \cite{nonconvex_pow}. We focus on feasible approximate solutions to \eqref{power_optim}. 

\paragraph{Classical baselines} We tested trust-constr \cite{trust-constr}, and Fractional programming (FP) \cite{shen_frac}, which uses a quadratic transform that yields a sequence of convex optimisation problems whose solutions converge to a local minimum to \eqref{power_optim}

\subsection{Discussion} 
As a result of its GPU-acceleration, ProjNet demonstrates a clear advantage in runtime over classical optimization methods on large-scale problems. When it comes to solution quality, ProjNet is very competitive with classical methods while outperforming them for transmit power optimization.

As anticipated in Section \ref{sec:comp_eff}, the hyperparameter $c_h$ governs the trade-off between accuracy and speed, enabling users to tailor ProjNet's performance to specific application requirements. Ablation baselines confirm that both the SVC and CAD components offer a meaningful increase in performance over standard alternatives, with CAD providing the most substantial performance gains.

\section{Conclusion} \label{sec: conclusion}
We presented ProjNet, a GNN architecture designed to solve constraint satisfaction problems involving large-scale, input-dependent linear constraints. ProjNet leverages the CAD algorithm, for which we present a convergence result, a GPU implementation, and an efficient surrogate gradient for training ML models that embed the CAD algorithm.

We evaluated the ProjNet architecture on four classes of linearly constrained optimisation problems. In comparison to both classical and ML baselines, we demonstrate that ProjNet offers a strong and tunable trade-off between computational efficiency and solution quality.

\section{Acknowledgments}
We gratefully acknowledge the support of the EPSRC Programme Grant EP/V026259/1, 'The mathematics of Deep Learning'.

\printbibliography
\newpage

\appendix

\section{Proofs}

\subsection{Convergence proof of the CAD algorithm}\label{sec:proof}

\begin{proof}[Proof of Theorem \ref{th:convergence}]

    Let $x^{(k)}, p^{(k)}_i$ denote the CAD iterates with initial point $x \in \mathbb R^n$ and $m$ convex constraints $C_i \subset \mathbb{R}^n$. We show that there exists two convex sets $ \mathbf  C_1, \mathbf C_2 \subset \mathbb R^{m \times n}$ and an initial point $\mathbf X \in R^{m \times n}$ which, when used as inputs for the two-set Dykstra algorithm \ref{dykstra}, produce iterates $ \mathbf X^{(k)}, \mathbf P^{(k)},\mathbf Q^{(k)} \in \mathbb R^{m \times n}$ such that for all $k$ we have
    \begin{align}\label{iterate_equiv}
        \begin{split}
        x^{(k)} &= T(\mathbf{X}^{(k)}), \\
        p^{(k)}_i &= \mathbf{P}^{(k)}_i,\quad 1\leq i \leq m
        \end{split}
    \end{align}
    where $T$ is a certain linear bijection. We do this by using a sparse product space formulation to describe the component-averaging process as a linear projection. Using this idea, we can describe both the simultaneous projections and component averages computed in \eqref{cad} as projections onto convex sets. The convergence result for CAD iterates $x^{(k)}$ then follows from the convergence of $\mathrm x^{(k)}$ to $P_{\mathbf C_1 \cap \mathbf C_2}(\mathbf{X}).$ 

    Let $\mathbf{C}_1  = \bigtimes^m_{i=1} C_i \subset \mathbb{R}^{m \times n}$ be  the Cartesian product of sets $C_i$, formulated as a set of matrices. For any $c=(c_1,\ldots,c_m)\in \mathbf{C}_1$, we interpret row $c_i\in C_i$ as variable values  satisfying  constraint $C_i$. Hence,  $\mathbf C_1$ encodes the problem constraints, and is the set of all possible variable values satisfying the constraints. Let \( \mathbf{X}_i \in \mathbb{R}^n \) denote the \( i \)-th row of matrix \( \mathbf{X} \in \mathbb{R}^{m \times n} \). Then, the projection of $X$ onto $\mathbf C_1$ is simply the projection of its rows onto the indivisual constaint sets:
    \[
    P_{\bf{C}_1}(\mathbf{X}) = [P_{C_1}(X_1), \hdots, P_{C_m}(X_m)] = \bigtimes_{i=1}^m P_{C_i}(\mathbf{X}_i),
    \]
    where we use the cross symbol to denote the stacking of row vectors into a matrix.

    Define the set $\mathbf{C}_2 \subset \mathbb{R}^{m \times n}$ as the collection of matrices $\mathbf{X}$ such that:
    \begin{itemize}
    \item $ \mathbf{X}_{ij} = 0$ if variable \( j \notin N_i \), i.e., \( j \) is not involved in constraint \( C_i \),
    \item \( \mathbf{X}_{ij} = \mathbf{X}_{kj} \) for all \( i, k \) such that \( j \in N_i \cap N_k \), i.e., a shared variable $j$  has consistent values across all constraints they appear in.
\end{itemize}

In other words, \( \mathbf{C}_2 \) encodes the sparsity and variable-sharing structure of the constraints \( C \): only relevant variables appear in each row, and any variable shared across constraints must take the same value in those rows.

We now show that the linear projection $P_{\mathbf C_2}(\mathbf{X})$ is the component average over the columns of $\mathbf{X} \in \mathbb R^{m \times n}$. For this, we define basis matrices \( \mathbf{V}^{(k)} \in \mathbb{R}^{m \times n} \) for each variable \( k \in \{1, \ldots, n\} \), where each \( \mathbf{V}^{(k)} \) encodes which constraints involve variable \( k \). Specifically, the entries of \( \mathbf{V}^{(k)} \) are given by:
\[
\mathbf{V}^{(k)}_{ij} =
\begin{cases}
1, & \text{if } j = k \text{ and } k \in N_i, \\
0, & \text{otherwise}.
\end{cases}
\]
That is, the \( i \)-th row of \( \mathbf{V}^{(k)} \) is nonzero only if constraint \( C_i \) depends on variable \( k \), and the nonzero entry appears in column \( k \). The subspace \( \mathbf C_2 \subset \mathbb{R}^{m \times n} \) has an orthonormal basis given by the scaled matrices \( (l_k^{-1/2} \mathbf{V}^{(k)})_{k=1}^n \), where \( l_k \) denotes the number of constraints involving variable \( k \).

  The linear projection is then given as
  \begin{align*}
      P_{\mathbf C_2}(\mathbf{X}) &= \sum_{j=1}^n \frac{\mathbf{V}^{(j)}}{l_j}  \left\langle \mathbf{V}^{(j)}, \mathbf{X} \right\rangle \\ 
      &=  \sum_{j=1}^n \frac{\mathbf{V}^{(j)}}{l_j} \sum_{i \in L_j} \mathbf{X}_{ij} .
  \end{align*}

There is a natural correspondence between elements \( \mathbf{X} = \sum_{j=1}^n \gamma_j \mathbf{V}^{(j)} \in \mathbf C_2 \) and vectors \( x = [\gamma_1, \ldots, \gamma_n]^T \in \mathbb{R}^n \), defined by the linear bijection $T\colon \mathbf C_2 \to \mathbb{R}^n$,
\[
T\left(\sum_{j=1}^n \gamma_j \mathbf{V}^{(j)}\right) = [\gamma_1, \ldots, \gamma_n]^T.
\]
This map identifies the representation of a matrix in \( \mathbf C_2 \) with its coefficient vector in \( \mathbb{R}^n \). 

For an initial input $x \in \mathbb R^n$, let $\mathbf{X} = T^{-1}(x) \in \mathbf C_2$. The projection of $\mathbf{X}$ onto the constraint sets followed by projection back onto \( \mathbf C_2 \), and mapped to \( \mathbb{R}^n \) by \( T \), yields:
\begin{align*}
T \circ P_{\mathbf C_2} \circ P_{\mathbf C_1}(\mathbf{X})
&= T \circ P_{\mathbf C_2}\left(\bigtimes_{i=1}^m P_{C_i}(\mathbf{X}_i)\right)
= T\left(\sum_{j=1}^n \frac{\mathbf{V}^{(j)}}{l_j} \left\langle \mathbf{V}^{(j)},\bigtimes_{i=1}^m P_{C_i}(\mathbf{X}_i)\right\rangle \right) 
\end{align*}

The inner product \( \left\langle \mathbf{V}^{(j)}, \bigtimes_{i=1}^m P_{C_i}(\mathbf{X}_i) \right\rangle \) simplifies to:
\[
\left\langle \mathbf{V}^{(j)}, \bigtimes_{i=1}^m P_{C_i}(\mathbf{X}_i) \right\rangle = \sum_{i \in L_j} (P_{C_i}(\mathbf{X}_i))_j = \sum_{i \in L_j} \left(P_{C_i}(x)\right)_j,
\]

Hence,
\begin{align}\label{eq:cadpropproof}
T \circ P_{\mathbf C_2} \circ P_{\mathbf C_1 }(\mathbf{X}) &= T\left(\sum_{j=1}^n \frac{\mathbf{V}^{(j)}}{l_j} \sum_{i \in L_j} \left(P_{C_i}(x)\right)_j \right) = \left( \frac{1}{l_j} \sum_{i \in L_j} \left(P_{C_i}(x)\right)_j \right)_{j = 1}^n.
\end{align}
Note that the right-hand side of \eqref{eq:cadpropproof} coincides with the component-average in \eqref{cad}. We can now show two equivalences \eqref{iterate_equiv}. For $k=1$ both equivalences hold by definition. For the induction step, we assume they hold at iteration $k-1$ and show they also hold at iteration $k$. For the first equivalence we have:
\begin{align*}
    x^{(k)} &= T \circ P_{\mathbf C_2} \circ P_{\mathbf C_1}(\mathbf{X}^{(k-1)} + \mathbf{P}^{(k-1)}) \\
    &=T \circ P_{\mathbf C_2} \left(P_{\mathbf C_1}(\mathbf{X}^{(k-1)} + \mathbf{P}^{(k-1)}) + \mathbf{Q}^{(k-1)} \right) \\
    &= T(\mathbf{X}^{(k)})
\end{align*}
where the second equality follows from the fact that $\mathbf{Q}^{(k-1)}$ is orthogonal to $\mathbf C_2$. To show the second equivalence, recall that for all $i \in \{1,\hdots, m\}$:
\begin{align*}
    p^{(k)}_i = x^{(k-1)} + p^{(k-1)}_i - P_{C_i}(x^{(k-1)} + p^{(k-1)}_i).
\end{align*}
For all $j \notin L_i$, we have $p^{(k)}_{ij} = 0$ and since $x^{(k-1)}_j = \mathbf{X}^{(k-1)}_{ij}$ for all $j \in L_i$ we have
\begin{align*}
    p^{(k)}_i = \left(\mathbf{X}^{(k-1)} + \mathbf{P}^{(k-1)} - P_{\mathbf C_1}(\mathbf{X}^{(k-1)} + \mathbf{P}^{(k-1)}) \right)_i = \mathbf{P}^{(k)}_i.
\end{align*}

The convergence of \eqref{dykstra} implies the convergence of the CAD iterates $x^{(k)}$ to the projection
    \begin{align*}
        T \circ P_{\mathbf{C}_1 \cap \mathbf C_2}(\mathbf{X}) &= T\left(\argmin_{\mathbf Y \in\mathbf{C}_1 \cap \mathbf C_2} ||\mathbf Y - \mathbf{X}||^2\right) \\
        &= T\left(\argmin_{\mathbf Y \in\mathbf{C}_1 \cap \mathbf C_2} \sum_{j=1}^n \sum_{i\in L_j} (\mathbf Y_{ij} -\mathbf{X}_{ij})^2\right)\\
        & = \argmin_{y \in C} \textstyle \sum_{j=1}^n l_j(y_j - x_j)^2  = P^l_C(x).
    \end{align*}

    Finally, we derive the limit of the CAD algorithm for input $(x_j / \sqrt{l_j})_{1 \leq j \leq n}$. 
    By scaling $x$ and $C$, we can use algorithm \eqref{cad} to compute the orthogonal projection $P_C(x)$ instead. Given any two vectors $x, y \in \mathbb R^n$ with $y_i \neq 0$ and a set $C \subset \mathbb R^n$, let $x / y$ denote element-wise division and let $C / y = \{x / y: x\in C\}$. Setting $l = (l_1,\hdots, l_n)$ and using inputs $x/\sqrt{l},\, C/\sqrt{l}$ in \eqref{cad} converges to 
    \begin{align*}
        P^l_{C/\sqrt{l}}\left( x/\sqrt{l}\right) = \argmin_{y \in C/\sqrt{l}} \textstyle \sum_{i} (\sqrt{l_i}\,y_i - x_i)^2 
        = \left(\argmin_{y \in C} ||x - y||^2\right)\big/ \sqrt{l} 
        = P_C(x) / \sqrt{l}.
    \end{align*}
    Therefore, we   obtain the projection $P_C(x)$ using the CAD algorithm \eqref{cad} by scaling $x$ and $C$ by $1/\sqrt{l}$, and scaling the output by $\sqrt{l}$.
\end{proof}

\subsection{Properties of the surrogate Jacobian}\label{sec:propsurrogate}

\begin{proof}
We prove each part of the proposition separately.

\begin{enumerate}

    \item \textbf{Rank guarantee.}  
    As $x-P_C(x) = 0$ for $x\in C$, this yields  $\mathrm{rank}(P_{H_x}) = n$.  For  $x\notin C$, we have \( x - P_C(x) \neq 0 \), implying that
    the direction vector \( d_x = \frac{x - P_C(x)}{\|x - P_C(x)\|} \) is well-defined. Then \( P_{H_x} = I - d_x d_x^T \) is the orthogonal projection onto the hyperplane normal to \( d_x \). Since \( d_x d_x^T \) is a rank-1 matrix, \( P_{H_x} \) has rank \( n - 1 \).

    It can be seen that there are no lower bounds on the rank of the exact gradient $\partial_xP_C$. We can demonstrate this using a simple counter-example. Let $C \subset \mathbb R^n$ be the unit hyper-cube defined by the inequality $x \leq \mathds 1$. For $x = 2\cdot \mathds 1$ we have $P_C(x) = \mathds 1$ which is a vertex of $C$. The tangent cone and supporting hyperplane at $x$ are given by
\begin{align*}
    T_x &= \{x: x\leq 0\} \\
    H_x &= \{x: \sum_ix_i = 0\}.
\end{align*}
These sets intersect only at the origin, therefore $\text{rank}(\partial_xP_C) = 0$.

    \item \textbf{Exactness.}  

We prove each direction of the equivalence.

\begin{itemize}
    \item \textbf{Case 1: \( x \in \mathrm{int}(C) \).}  
    Then \( P_C(x) = x \), so \( d_x = 0 \) and the surrogate projection becomes \( P_{H_x} = I \). The projection map is locally the identity, so the Jacobian \( \partial_x P_C = I \) as well. Hence, \( P_{H_x} = \partial_x P_C \).

    \item \textbf{Case 2:} \( P_C(x) \in \mathrm{int}(C_i) \) for all but one half-space \( C_i \). 
    In this case, \( P_C(x) \) lies on the boundary of exactly one of the half-spaces defining \( C \), and in the interior of the rest. Then, locally around \( x \), the constraint set behaves like a single active hyperplane. Therefore, the projection \( P_C \) is locally the orthogonal projection onto that hyperplane. Since the Jacobian \( \partial_x P_C \) in this case is the projection matrix onto the corresponding hyperplane, and \( H_x \) is defined as the hyperplane orthogonal to \( x - P_C(x) \), we again have \( P_{H_x} = \partial_x P_C \).

    \item \textbf{Converse:}  
    If \( P_{H_x} = \partial_x P_C =P_{T_x \cap H_x} \), it must be that $H_x \subset T_x$. This occurs when no constraints are active (i.e., \( x \in \mathrm{int}(C) \)), since then $H_x = T_x$. If $x \notin C$ then then $H_x$ is a hyperplane , which could only be contained in $T_x$ if the boundary of $C$ at $P_C(x)$ is locally of dimension $n-1$. Clearly, this only occurs if only one constraint set is active, i.e. when \( P_C(x) \in \mathrm{int}(C_i) \) for all but one $i$.
\end{itemize}

    \item \textbf{Alignment with descent direction.}  
    Let \( v \in \mathbb{R}^n \). Note that \( \partial_x P_C(v) \in T_x \cap H_x \), where \( T_x \) is the tangent cone at \( P_C(x) \), and \( H_x \) is the hyperplane orthogonal to \( x - P_C(x) \). 
    Since \( P_{H_x} \) is the orthogonal projection onto \( H_x \), and \( \partial_x P_C(v) \in H_x \), applying the projection preserves the vector:
    \[
    P_{H_x}( \partial_x P_C(v) ) = \partial_x P_C(v).
    \]
    Therefore, we have:
    \[
    \left\langle \partial_x P_C(v), P_{H_x}(v) \right\rangle 
    = \left\langle \partial_x P_C(v), \partial_x P_C(v) \right\rangle 
    = \left\| \partial_x P_C(v) \right\|^2.
    \]
    \item \textbf{Local equivalence of gradient steps:}
    Since $P_C$ is piece-wise affine and differentiable at $x$, there exists some $\beta > 0$ such that $P_C$ is affine in the $\beta$-neighbourhood of $x$. In this affine region we have
    $$P_C(x + y) = P_C(x) + P_{T_x \cap H_x}(y).$$
    for any $y \in \mathbb R^n$ with $||y|| < \beta$. Moreover, since projections are non-expansive we have $$||\partial_x P_C(y)|| \leq ||P_{H_x}(y)|| \leq ||y|| < \beta.$$ Therefore, both surrogate and exact gradient steps are in the affine region of $P_C$. We then compute 
    \begin{align*}
        P_C(x + P_{H_x}(y)) &= P_C(x) + P_{T_x \cap H_x} \circ P_{H_x}(y) \\
        &= P_C(x) + P_{T_x \cap H_x}(y) \\
        &=P_C(x) + P_{T_x \cap H_x} \circ P_{T_x \cap H_x}(y) \\
        & =  P_C(x + \partial_xP_C(y)).
    \end{align*}
    
\end{enumerate}
\end{proof}

\section{Generating test problems} \label{sec:gen_prob}
In this section, we describe the distributions of test problems which were used to train and test each of the three application problems described in Section \ref{sec:results}.

\subsection{Generating problem objectives}
Components of the linear objective terms in \eqref{lp} and \eqref{quad} were sampled i.i.d uniformly as $c_i \sim U(-1, 1)$. The sparse quadratic objective term in \eqref{quad} was generated by first determining its sparsity pattern, choosing each element $Q_{ij}$ as non-zero with probability $d/n$. Each non-zero element was sampled uniformly as $Q \sim U(-10, 10)$. Finally, for the transmit power task, the channel gain matrix was generated by sampling a geometric graph to determine the sparsity pattern of $H$ then setting non-zero entries to $H_{ij} = (d_{ij} + 1)^{-3}$ where $d_{ij}$ is the distance between nodes $i$ and $j$ in the geometric graph. This is a standard distribution to use for transmit power problems \cite{wireless_gnn}.

\subsection{Generating linear constraints}

We used sets of sparse, randomly generated linear constraints of the form $Ax \leq b$ which we now describe. Let $d$ denote the average degree of the adjacency matrix $A\in \mathbb R^{m\times n}$. To determine the sparsity pattern of $A$, we use the method described in the pervious section with degree $d-1$, and then setting one additional random non-zero element $A_{ij}$ in each row $i$ to ensure no constraints are redundant. The value of $d$ determines the average degree of the corresponding bi-partite constraint graph. Then, each column $A_i$ is sampled as random unit length vector satisfying the chosen sparsity pattern. Finally, $b = u + A s$ where $u_i \sim U(0.1, 1)$ and $s_i \sim \mathcal{N}(0, 1)$.

Our random linear constraints produce non-empty bounded polytopes such that a ball of radius 0.1 is guaranteed to fit inside $C$. Consequently, although the test problem distribution is appears fairly diverse, all instances are well-scaled and feasible. We leave the handling of poor scaling and infeasibility to future work.

\section{Stopping conditions}\label{sec:stopping}

Stopping conditions are an essential part of any optimisation algorithm as they ensure that the computed solution is within a specified tolerance to the true solution while minimising the number of required iterations. We discuss stopping conditions for the CAD algorithm and outline some issues with comparing solvers which use different stopping conditions.

\subsection{CAD conditions}
As shown in \cite{stop_cond}, Dykstra's algorithm can be stopped by using a certain monotone sequence. However, we decided to use the feasibility condition $\max\{ Ax^{(i)} - b \} \leq \epsilon$ for a given precision $\epsilon > 0$ which appears to work well empirically. We provide a brief discussion of this condition and leave a more detailed theoretical analysis for future work.

For an initial point $x$ and CAD iterates $x^{(i)}$, we observed numerically that the sequence $(||x - x^{(i)}||_2)_i$ is strictly increasing. We are not aware of this result in the literature but, assuming it does hold, we may use the feasibility stopping condition $\max\{Ax^{(i)} - b \} \leq \epsilon$ since the condition is then satisfied with $\epsilon = 0 \iff x_i = P_C(x)$ as $$||x - P_C(x)||_2 < ||x - y||_2,\quad \forall y \in C\,\setminus \{P_C(x)\}.$$
Moreover, Hoffman's lemma \cite{error_bound} guarantees the existence of an error bound of the form $$d(x_i, C) = \min_{y \in C} ||x^{(i)} - y||_2 \leq c \max\{Ax^{(i)} - b \}$$ for some constant $c > 0$ which depends only on $C$. This allows us to place lower and upper bounds on the dual solution $x - P_C(x)$ as
\begin{align*}
    || x - x^{(i)}||_2 \leq ||x - P_C(x)||_2 &\leq ||x - x^{(i)}||_2 + d(x^{(i)}, C) \\
    & \leq ||x - x^{(i)}||_2 + c \max\{Ax^{(i)} - b \}.
\end{align*}

\subsection{Comparing solvers}
When comparing different optimisation algorithms at a given numerical precision, it is often difficult to equate numerical tolerance parameters $\epsilon$ due to differences in stopping conditions. We only use a primal feasibility condition for the CAD algorithm while most solvers use both primal and dual feasibility conditions. Gurobi appears to only use absolute values of $\epsilon$ while PDLP uses both absolute and relative $\epsilon$'s. The comparisons we offered are thus not completely precise but we would expect the general trends to hold. 

\section{Implementation details} \label{sec:implementation}
In this section, we discuss some practical details regarding our numerical experiments. All experiments were conducted on a single machine with an NVIDIA GeForce RTX 4090 GPU and an AMD Ryzen 9 7950X3D CPU. 
\subsection{Computing graph connected components}\label{sec:component_code}
For a given bipartite graph with $n$ nodes of one type and $m$ nodes of another, we describe a simple label propagation method for computing the set $P$ of connected components. Initializing node labels as $l^{(1)} =(1,\hdots,m)$ and iteratively updated them as $$l^{(k+1)}_i = \min_{j \in N_i} l^{(k)}_j.$$ Each label $l^k_{i}$ is the smallest node index in the $k$-hop neighbourhood of node $i$, therefore for any two connected nodes $i,j \in \{1,\hdots, n\}$ there exists some $k' > 0$ such that $l_{k,i} = l_{k,j}$ for all $k \geq k'.$ This sequence of labels converges to a vector $l \in \mathbb{R}^m$ where all constraints in the same connected component share the same label. This algorithm can be efficiently implemented using GPU scattering operations, described below.

\subsection{CAD implementation}\label{sec:cad_code}
Our GPU implementation of the CAD algorithm relies on the \textbf{scatter} function which is used to compute inhomogeneous sums. Efficient implementations of $\scatter$ are provided by most popular ML frameworks including Pytorch \cite{pytorch} and TensorFlow \cite{tensorflow}. Given an index array $I \in \mathbb{N}^n$ and a value array $V \in \mathbb{R}^n$, scattering computes $$(\scatter(I,V))_i = \sum_{j:I_j = i}V_j.$$ Scattering allows us to compute both the half-space projections \eqref{lin_proj} and component inhomogeneous averages without directly resorting to sparse matrix methods, which we found to be slower.

The full linear CAD algorithm is shown in Algorithm \ref{alg:linear_cad}. The algorithm takes a sparse representation of $A$ as input where $A_{\text{row}}, A_{\text{col}}$ denote the row and column indices of the non-zero values of $A$ with values $A_V$, and includes tolerance $\epsilon$. The matrix $A$ is assumed to contain at least one non-zero element in each row, as otherwise that row is redundant. The while loop in lines 9-15 includes the GPU-accelerated CAD iterates in \eqref{cad} while lines 3, 5, and 16 perform a change of variables to ensure that the algorithm converges to the projection $P_C(x)$ as opposed to the non-orthogonal projection, based on Theorem \ref{th:convergence}. Algorithm \eqref{alg:linear_cad} is executed concurrently for each independent set of constraints $p \in \mathcal P$ with independent stopping conditions.
\begin{algorithm}[h]
    \caption{The linear CAD algorithm}\label{alg:linear_cad}
    \begin{algorithmic}[1]
    \item \textbf{Input:} $A_{\text{row}}, A_{\text{col}},  A_V, b, x, \epsilon $
    \State $l \gets \scatter(A_{\text{col}}, \mathds 1)$
    \State $A_V \gets A_V \cdot \sqrt{l}[A_{\text{row}}]$
    \State $A_{\text{norm}} \gets \sqrt{\scatter(A_{\text{row}}, A^2_V)}$
    \State $x \gets x / \sqrt{l}$
    \State $A_V \gets A_V/A_{\text{norm}}$
    \State $b \gets b/A_{\text{norm}}$
    \State $p \gets 0$
    \While{$\max\{A x - b \} > \epsilon$}
    \State $x_V \gets x[A_{\text{col}}]$
    \State $z \gets x_V + p$
    \State $s \gets \min\{b - \scatter(A_{\text{row}},A_V\cdot z), 0\}$
    \State $p \gets -A_V\cdot  s[A_{\text{row}}]$
    \State $x \gets \scatter(A_{\text{col}}, z - p)\,/\,l$
    \EndWhile
    \State $x \gets x \cdot \sqrt{l}$
    \item \textbf{Output:} $x$
    \end{algorithmic}
\end{algorithm}

\subsection{Computational efficiency experiments}
The results in Figure \ref{fig:cad_runtime} were obtained for randomly generated initial points $x$ where components are sampled as $x_i \sim U(-\delta, \delta)$, and random linear constraints $Ax \leq b$. Random constraints where sampled as described in \ref{sec:gen_prob}, but without the offset $u$. This ensures that the the origin-centred ball with radius 0.1 is feasible. The parameter $\delta$ is hence a measure of the average distance of $x$ to $C$. Results were gathered for different values of $\delta$ and the tolerance parameter $\epsilon$. Five data points where sampled for each triple $(\delta, \epsilon, n)$ tested. Besides Gurobi, we also tested the Clarabel \cite{clarabel} and OSPQ \cite{osqp} solvers but found them to be slower for this class of problems.

\subsection{GNN architecture}
GNN message passing layers on constrained graphs $G_C$ were computed using three separate message passes between the two different node types in $G_C$. Using the notation $X \to Y$ to denote a bipartite graph convolution which passes features of nodes $X$ to nodes $Y$, we computed the following message-passes in order:
\begin{enumerate}
    \item Variables $\to$ Constraints
    \item Variables $\to$ Variables
    \item Constraints $\to$ Variables
\end{enumerate}
Each of these message-passes was performed using a Graph Attention Network with dynamic attention \cite{gatv2}. 

For linear programming, we trained ProjNet models with 8 message passing layers for $\GNN^v_\theta$ and no SVC layers. We didn't use any SVC layers since LPs always have solutions on the boundary of their feasible set, hence the boundary bias of projections doesn't negatively impact performance. As a result, the no SVC baseline wasn't relevant for LPs. For all other application problems we used 8 message passing layers and 3 SVC layers. As GNN and SVC components are fairly shallow, the CAD algorithm was the most expensive element of our ProjNet models Architectures of ablation baselines were adjusted to ensure that all models had a roughly equal number of learned parameters.

\subsection{Ablation experiments}
We provide more details regarding the three ablation models considered. These models were obtained by replacing the CAD projection and sparse vector clipping (SVC) component in ProjNet. As a substitute for SVC, we used a standard form of the vector clipping scheme which instead directly uses the full constraint scaling value $\alpha_C$ and as a substitute for the CAD algorithm we instead took $z$ to be a point in the interior $C$ which was computed by solving the following linear program
$$z = \argmax_{x \in C} (\min\{b - Ax\}).$$

Runtime results for these ablation models were omitted from the main paper as they were generally not competitive to ProjNet in terms of their accuracy-speed trade-off. We provide more complete runtime results in Figure \ref{fig:additiona_res}.

\begin{figure}[h]
    \centering
    \hspace{-0.2cm}
    \begin{subfigure}{0.5\textwidth}
        \includegraphics[width=\linewidth]{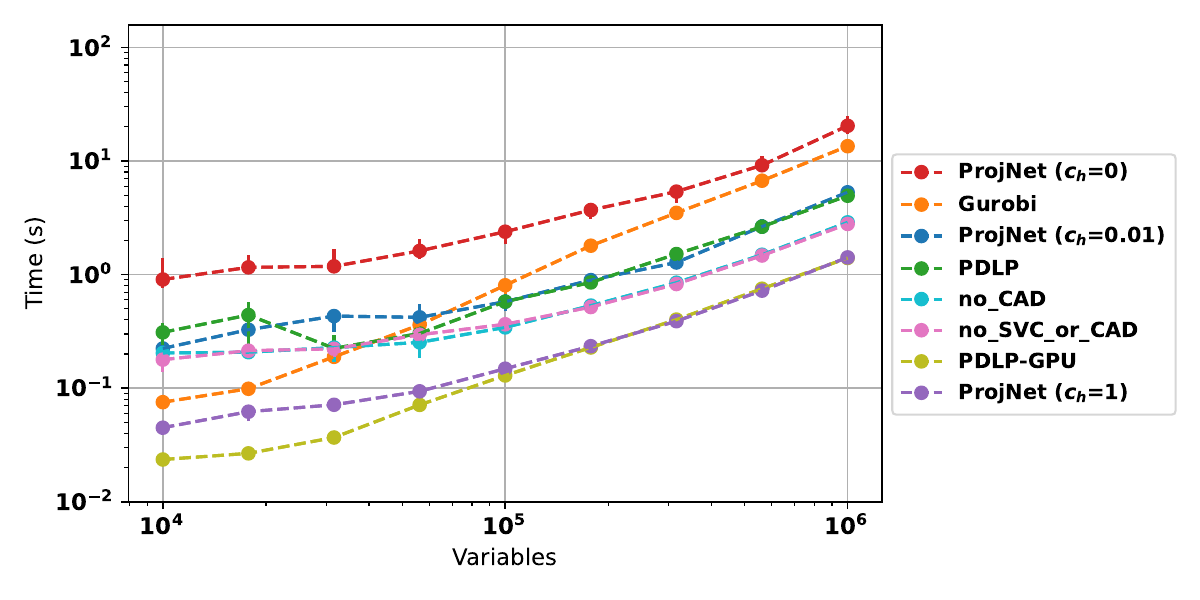}
        \caption*{Linear programming}
    \end{subfigure}
    \begin{subfigure}{0.5\textwidth}
        \includegraphics[width=\linewidth]{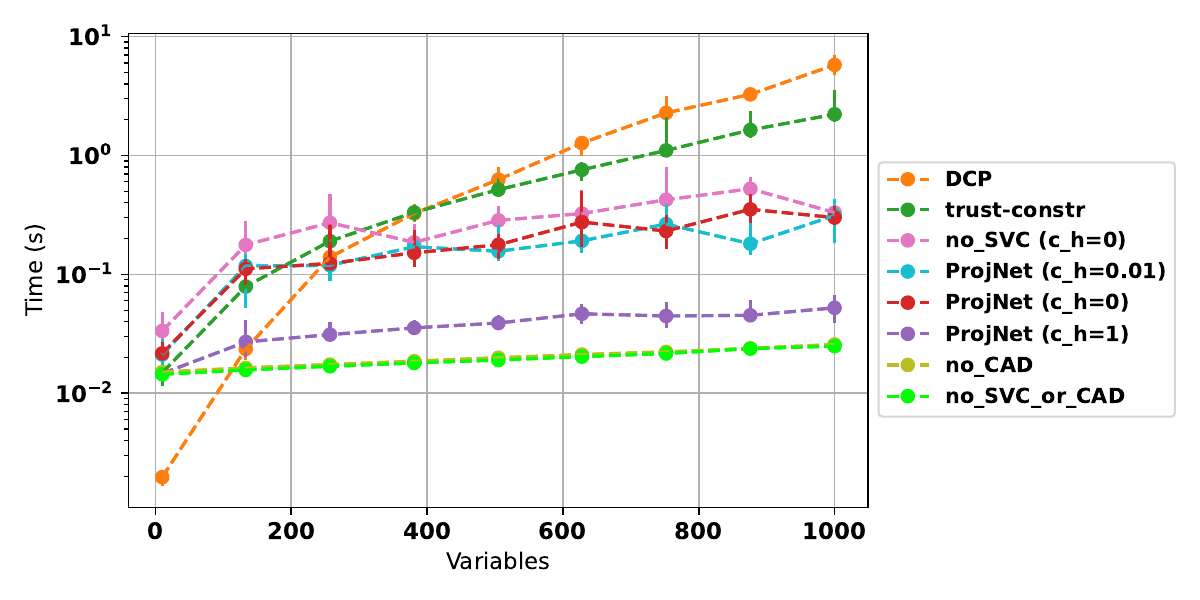}
        \caption*{Quadratic (ER)}
    \end{subfigure}\\
    \hspace{-0.2cm}
    \begin{subfigure}{0.5\textwidth}
        \includegraphics[width=\linewidth]{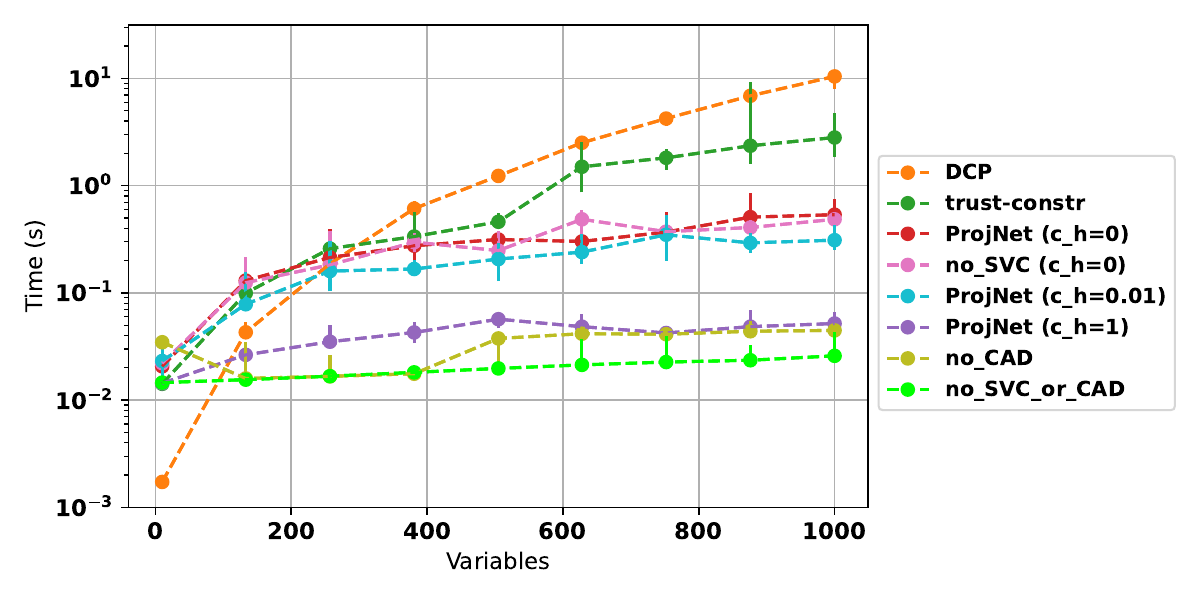}
        \caption*{Quadratic (BA)}
    \end{subfigure}
    \begin{subfigure}{0.5\textwidth}
        \includegraphics[width=\linewidth]{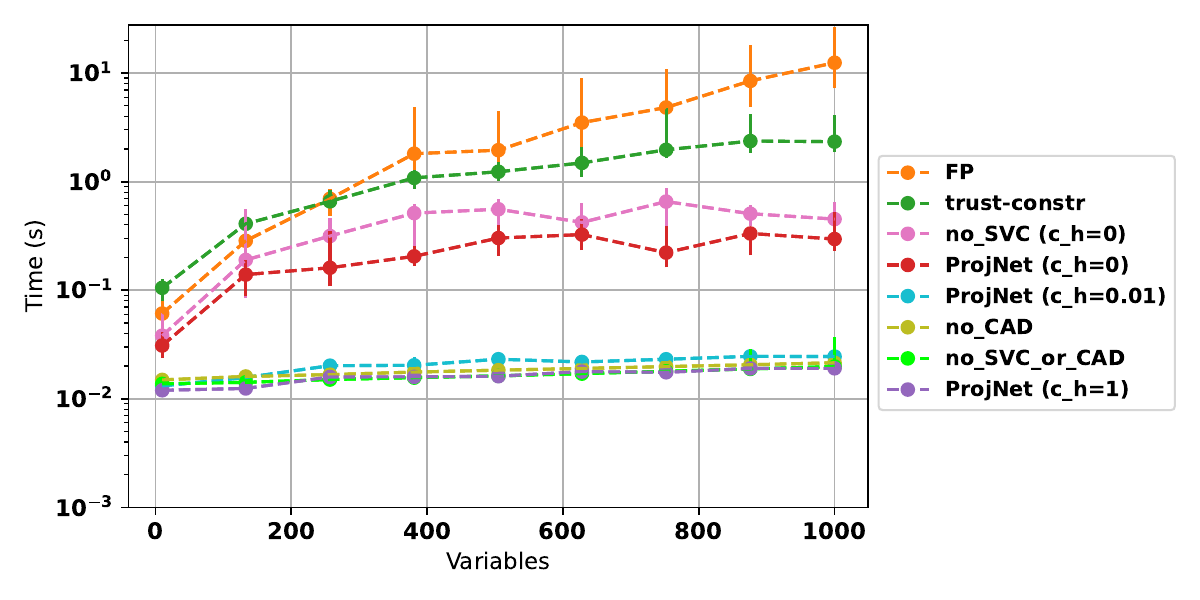}
        \caption*{Transmit power}
    \end{subfigure}
    \caption{Plots showing method runtime as a function of number of number of variables $n$ for all baseline methods and application problems considered. Error bars denote upper/lower quartiles. Numerical tolerance was set to $\epsilon = 10^{-3}$.}
    \label{fig:additiona_res}
\end{figure}

\subsection{Difference of convex programming}
The DCP method for solving non-convex quadratics is based on a heuristic method outlined in \cite{dccp}. For a problem of the form \eqref{quad}, we write $Q = D - C$ where $D$ is a diagonal matrix with entries $D_{ii} = \sum_j c \cdot |Q_{ij}|$ where $c > 1$. Choosing $D$ this way ensures that $C$ is diagonally dominant, and hence $-C$ is negative-definite. Let $x_0 \in \mathbb{R}^n$, by linearising the diagonal matrix $D$, we can obtain a heuristic solution to \eqref{quad} by solving the following sequence of convex problems:

\begin{align}
    x^{(n+1)} \in  \argmax_x \{x^T(c + Dx^{(n)}) - x^TCx\, :\, Ax \leq b \}.
\end{align}

These sub-problems were solved using the PIQP solver \cite{piqp}.

\section{Surrogate gradients experiments} \label{sec:surr_grad_experements}
In Figure \ref{fig:surr_grad}, we provide empirical results comparing the exact and surrogate gradients from Section \ref{sec:diff_proj} for training ProjNet models. The exact gradient was computed using using the CAD algorithm with subspace constraint sets $C_i$, which in this case reduces to the component-averaging method presented in \cite{cav_gpu}.
\newpage 
\begin{figure}[h]
    \centering
    \hspace{-0.6cm}
    \begin{subfigure}{0.52\textwidth}
        \includegraphics[width=\linewidth]{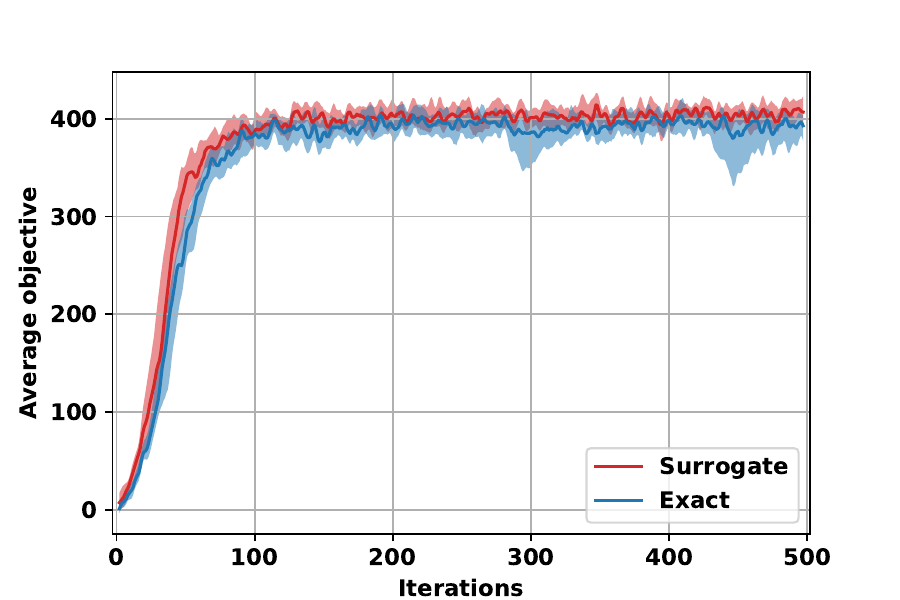}
        \caption*{Linear programming}
    \end{subfigure}\hspace{-0.6cm}
    \begin{subfigure}{0.52\textwidth}
        \includegraphics[width=\linewidth]{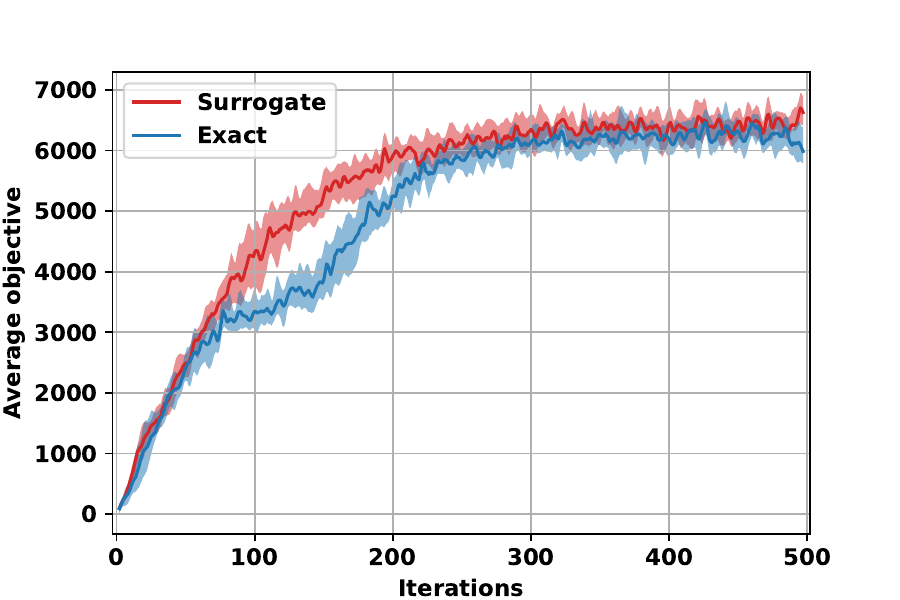}
        \caption*{Non-convex quadratic programming with ER graphs}
    \end{subfigure}
    \caption{Training curves comparing exact and surrogate gradients to train ProjNet models for solving linear and non-convex quadratic programs. Error regions denote upper/lower quartiles. Curves are smoothed for visibility.}
    \label{fig:surr_grad}
\end{figure}

\end{document}